\algnewcommand\algorithmicinput{\textbf{Input:}}
\algnewcommand\algorithmicoutput{\textbf{Output:}}
\algnewcommand\Input{\item[\algorithmicinput]}%
\algnewcommand\Output{\item[\algorithmicoutput]}%
\def\norminf#1{\|#1\|_{\infty}}
\def\abs#1{|#1|}
\def\eps{\varepsilon}
\def\R{\mathbb{R}}
\def\hh{\widehat{h}}
\newcommand{\uh}{u_\mathrm{high}}
\newcommand{\ul}{u_\mathrm{low}}
\newcommand{\ch}{c_\mathrm{high}}
\newcommand{\cl}{c_\mathrm{low}}
\newcommand{\cm}{c_\mathrm{mixed}}
\theoremstyle{thmstyletwo}%
\newtheorem{theorem}{Theorem}
\newtheorem{corollary}[theorem]{Corollary}
\newtheorem{remark}[theorem]{Remark}
\newtheorem{model}[theorem]{Model}
\newtheorem{lemma}[theorem]{Lemma}
\numberwithin{equation}{section}
\numberwithin{theorem}{section}
\numberwithin{figure}{section}
\numberwithin{table}{section}
\numberwithin{algorithm}{section}
\title{\textbf{Mixed precision accumulation for neural network inference guided by componentwise forward error analysis}}
\author[1]{El-Mehdi El Arar\thanks{Corresponding author: \href{mailto:mehdi.elarar@lip6.fr}{mehdi.elarar@lip6.fr}}}
\author[2]{Silviu-Ioan Filip}
\author[1]{Theo Mary}
\author[3]{Elisa Riccietti}
\affil[1]{Sorbonne Universit\'e, CNRS, LIP6, 4 Place Jussieu, F-75005, Paris, France}
\affil[2]{Inria, IRISA, Universit\'e de Rennes, 263 Av. G\'en\'eral Leclerc, F-35000, Rennes, France}
\affil[3]{ENS de Lyon, CNRS, Inria, Universit\'e Claude Bernard Lyon 1 LIP, UMR 5668, 69342, Lyon cedex 07, France}
\date{}
\begin{document}

\maketitle

\begin{abstract}
This work proposes a mathematically founded mixed precision accumulation strategy for the inference of neural networks. 
Our strategy is based on a new componentwise forward error analysis that explains the propagation 
of errors in the forward pass of neural networks.  
Specifically, our analysis 
shows that the error in each component of the output of a linear layer is proportional to the condition number
of the inner product between the weights and the input, multiplied by the condition number of the activation function.
These condition numbers can vary widely
from one component to the other, thus creating a significant opportunity to introduce mixed precision:
each component should be accumulated in a precision inversely proportional to the product of these condition numbers.
We propose a numerical algorithm that exploits this observation:
it first computes all components in low precision, uses this output to estimate the condition numbers, 
and recomputes in higher precision only the components associated with large condition numbers.
We test our algorithm on various networks and datasets and confirm experimentally that it can significantly improve
the cost--accuracy tradeoff compared with uniform precision accumulation baselines.
\end{abstract}

\textbf{Keywords:} {Neural network, inference, error analysis, mixed precision, multiply--accumulate}

\section{Introduction}
Modern applications in artificial intelligence require increasingly complex
models and thus increasing memory, time, and energy costs for storing and deploying large-scale deep
learning models with parameter counts ranging in the millions and billions. 
This is a limiting factor
both in the context of training and of inference. While the growing training costs
can be tackled by the power of modern computing resources, notably GPU accelerators,
the deployment of large-scale models leads to serious
limitations in inference contexts with limited resources, such as embedded systems
or applications that require real-time processing.   

In recent years, the use of low precision arithmetic has emerged as a
successful strategy to decrease these costs, motivated by the
development of specialized hardware for machine learning, such as
Google's TPUs \cite{jouppi2017datacenter}, NVIDIA tensor cores~\cite{nvid24b}, and others~\cite{talib2021}, which
provide fast mixed precision matrix multiply--accumulate (MMA) operations.
Low precision is usually introduced in trained neural networks through the quantization of weights and activations, that is, 
by storing the network parameters and intermediate signals in low precision~\cite{gholami2022survey}.
Indeed, inference compute workloads are dominated by MMA operations,
which can be accelerated by using lower precision arithmetic. 
Quantization therefore significantly reduces the inference cost,
usually in exchange for minor reductions in model accuracy. It has indeed been empirically shown
that neural network inference can be done effectively even when weights and
activations are stored using 8 bits~\cite{nagel2021white,gholami2022survey}.  

While weights and activations are commonly stored in low precision, the accumulation
is usually done in high precision. This is partly because most specialized MMA
hardware mentioned above provide the capability of accumulating in high
precision with little or no performance penalty~\cite{bhlm20}, and partly
because accumulating in low precision can create significant numerical issues,
ranging from overflow to excessive rounding error
accumulation~\cite{high:ASNA2}.
Nevertheless, 
reducing the accumulation precision
can be an effective strategy to increase performance
for general-purpose processors~\cite{de2020quantization,ni2021wrapnet,xie2021overflow}.
This motivates further research on how to reduce this precision
as much as possible while avoiding numerical issues and preserving the
model accuracy. This is the main goal of this work.

There exist several approaches to reduce the accumulation of errors in finite precision,
and many of them have been considered for improving the accuracy of the training and/or the inference workflows of deep neural networks.
For example, stochastic rounding~\cite{chm21,cfhm22, eddp22} prevents errors from
accumulating all in the same direction and thus improves the average accuracy;
it has been used to accelerate training~\cite{gupta2015deep, effm25}.
Blocked summation methods~\cite[Chap.~4]{high:ASNA2}, \cite{bhm20} 
reduce the worst-case error bounds by constraining the summation order, and
have also been used for training acceleration~\cite{Wang2018}.
Scaling techniques can help to avoid overflow and minimize underflow~\cite{mami25},
and have especially received focus in the context of fixed-point
arithmetic~\cite{sakr2019accumulation,xie2021overflow,ni2021wrapnet,colbert2023quantized,colbert2024accumulator}.

All previously mentioned approaches only consider \textit{uniform precision} accumulation, that is, 
the accumulation precision is the same across all inner products (multiply--accumulate operations).
In this work, we will instead focus on \textit{mixed precision} accumulation, that is,
we will allow different inner products to be performed in different precisions.
The main advantage of mixed precision approaches is that
they can leverage the possible differences in sensitivity 
of different parts of the computation: whereas a uniform precision scheme would be
limited by the most sensitive parts that require the highest precision, a
mixed precision scheme can adaptively keep only these parts in high precision,
while switching the less sensitive parts to lower precision---ideally without
(significantly) impacting the model accuracy.

While mixed precision approaches have been extensively investigated for
\textit{quantization}~\cite{lin2016fixed,dong2020hawq,dong2019hawq,yao2021hawq,gong2019mixed,uhlich2019mixed,wang2019haq,yang2021bsq},
to the best of our knowledge, they have not been previously considered for \textit{accumulation}.
This work is therefore completely complementary to existing studies. On the one hand, 
our approach is agnostic with respect
to the quantization method (that is, it applies to any network, regardless of how it has been quantized). On the other hand, it considers two different accumulation precisions with unit roundoffs $\ul$ and $\uh$,
but does not otherwise make any specific assumptions on how this accumulation is performed: that is, our approach may be
combined with stochastic rounding, blocked summation, etc.; the specific choice of accumulation method will simply determine just how low $\ul$ can be,
and how high $\uh$ needs to be.

The key question that our work addresses is: how should we decide which inner
products to perform in which precision? Our approach aims at answering this question
in a mathematically founded way by basing the precision choice criterion on a rigorous error
analysis. 
We develop such an
analysis that considers inexact inference using a multilayer perceptron architecture with a very generic error model.
Our analysis is in spirit quite similar to the recent work of Beuzeville et al.~\cite{bbgm__},
which also analyzes the inference of neural networks in presence of errors.
However, there are some key differences between the two analyses. 
Indeed, Beuzeville et al. perform a \textit{backward} error analysis,
whereas we will focus on the \textit{forward} error. There are advantages to
both types of analyses: backward error analysis yields bounds that are mostly
independent of the neural network parameters (they depend on the number and
size of the layers, but not on the actual values of the weights), and allows
for establishing the numerical stability of inference---the main goal and result of \cite{bbgm__}.
 In contrast, the goal of our forward error analysis is
completely different: we seek bounds that directly relate the errors incurred
in each inner product to the accuracy of the final output of the network, in
order to identify possible mixed precision opportunities; thus, our bounds
strongly depend on the network parameter values, and this is precisely what we
exploit to develop a mixed precision strategy.  Most importantly, the analysis
of Beuzeville et al. bounds the \textit{normwise} error, that is, the error is
only bounded in (some) norm; this does not allow to distinguish the impact of
errors incurred in different components of each layer of the network: the
errors across different components are ``smudged'' together in norm. In
contrast, our analysis bounds the \textit{componentwise} error;
this allows us to precisely identify the size of the errors in each component.
In particular, we make the key observation
that the error incurred in each component is proportional to both the condition number 
of the inner product and the condition number of the activation function evaluated at
that component. In order to balance the errors across all components, we should therefore 
set the precision of each inner product to be inversely proportional to the associated condition number. 
Because the magnitude of these condition numbers can vary widely 
from one component to the other, this creates a significant opportunity for mixed precision.

To summarize, the first main contribution of this work is to 
perform a componentwise forward error analysis that guides us towards
a mixed precision inference evaluation strategy.
The second main contribution of this work is to develop a practical mixed precision 
algorithm that is guided by this analysis.
In order to make the algorithm practical, we must introduce some approximations: computing the exact condition numbers
would indeed be too expensive. Motivated by some empirical observations,
we however show that the condition numbers can be cheaply estimated as a by-product of the output of each layer computed in low precision. 
Therefore, we propose the following approach, summarized in Figure~\ref{fig.schema}: at each layer $\ell$, we first compute
the output $h_\ell=\phi_\ell(W_\ell h_{\ell-1})$ entirely (uniformly) in a low precision $\ul$.
Then, we estimate the condition number $\kappa_\ell$ and check each of its components $(\kappa_\ell)_i$:
components for which the condition number is small enough ($(\kappa_\ell)_i \le \tau$, for some tolerance $\tau$) are kept in precision $\ul$, whereas
those for which the condition number is too large ($(\kappa_\ell)_i > \tau$) are recomputed using a higher precision $\uh$.

We test the proposed algorithm on multilayer perceptron networks of various depth, trained on
the MNIST and Fashion MNIST datasets. Our experiments show that the algorithm can achieve
a flexible cost--accuracy tradeoff, tunable via the tolerance parameter $\tau$.
Crucially, the achieved tradeoff is in many cases significantly better than with uniform precision accumulation:
that is, our mixed precision accumulation approach can significantly improve the model accuracy
compared with a uniform low precision approach, for a significantly lower cost than the uniform high precision approach.

The rest of the paper is organized as follows: in section~\ref{sec:analysis} we
carry out our error analysis and discuss its significance. In section~\ref{sec:algo},
we develop an inference algorithm with mixed precision accumulation.
We test the algorithm experimentally in section~\ref{sec:num}.
Finally, we conclude in section~\ref{sec:concl}.

\begin{figure}
\includegraphics[width=\textwidth]{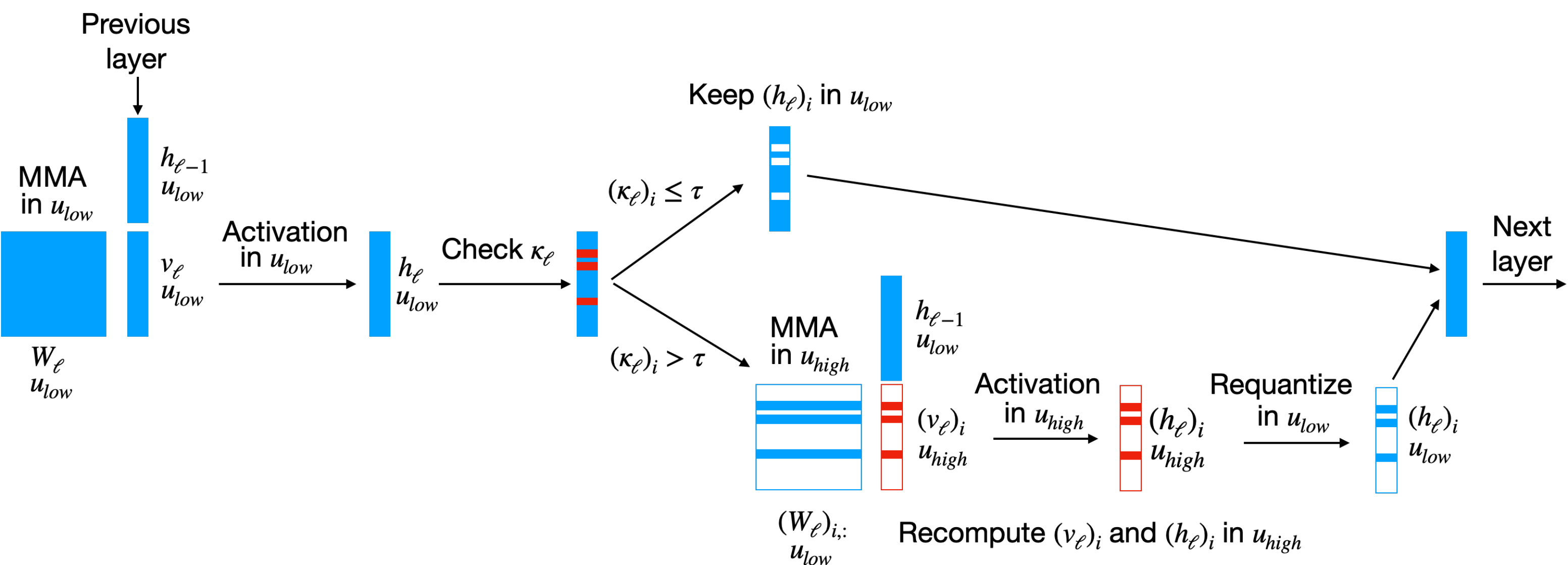}
\caption{Illustration of our inference approach 
with mixed precision accumulation (Algorithm~\ref{alg:ALG1}). At each layer $\ell$ we first compute 
the MMA $v_\ell = W_\ell h_{\ell-1}$ (where $h_{\ell-1}$ is the output of the previous layer)
and the activation $h_\ell = \phi_\ell(v_\ell)$ (where $\phi_\ell$ is the activation function)
in uniform low precision $\ul$ (blue). We estimate the condition number
$\kappa_\ell$ and use it to decide which components can be kept in low precision
(those for which $(\kappa_\ell)_i \le \tau$, for some tolerance $\tau$)
and which must be recomputed in higher precision $\uh$ (red); the latter are then requantized to low precision
and recombined with the components kept in low precision to produce the final output of the layer, which is passed to the next layer.
\label{fig.schema} }
\end{figure}

\section{Componentwise error analysis}\label{sec:analysis}

\subsection{Setting, notations, and error model}

We consider feedforward networks with $L$ layers, where each layer is indexed
by $\ell=1,\dots,L$ and composed of $n_\ell$ neurons.  We denote by
$W_\ell\in\R^{n_\ell\times n_{\ell-1}}$ the matrices of weights and by
$\phi_{\ell} : \R \mapsto \R$ the activation functions applied componentwise.
For an input $x \in \R^{n_0}$,
we denote $h_0 = x$ and for each layer $\ell$, the 
output of the layer $h_\ell\in\R^{n_{\ell}}$ is computed as
$$
h_\ell = \phi_\ell(W_\ell h_{\ell-1}).
$$
While we do use bias terms in the experiments in section~\ref{sec:num}, we do not include them
explicitly in the presented analysis for simplicity. The bias terms $b_\ell$ could be
easily included by redefining the 
weight matrices as $\overline{W}_\ell=\begin{bmatrix}
	{W}_\ell & {b}_\ell
\end{bmatrix}$ 
and the output of the $(\ell-1)$th layer as $\overline{h}_{\ell-1}=\begin{bmatrix} h_{\ell-1}& 1\end{bmatrix}^T$. We then have 
\[
{W}_\ell{h}_{\ell-1} + {b}_\ell = \begin{bmatrix}
	W_\ell & b_\ell
\end{bmatrix}\begin{bmatrix} h_{\ell-1} \\ 1\end{bmatrix}= \overline{W}_\ell \overline{h}_{\ell-1}.
\]

We will use the following notations.
Quantities affected by an error are marked by a hat.
We denote by $\circ$ the
Hadamard (componentwise) product and by $\oslash$ the Hadamard division; the Hadamard product of a matrix with a vector
multiplies the rows of the matrix by the components of the vector.
We denote by $\abs{\cdot}$ the absolute value, which
is applied componentwise for vectors and matrices. Inequalities between vectors $x\le y$ or matrices $A\le B$
of identical dimensions also apply componentwise; moreover, an inequality $A\le x$ between a matrix $A\in\R^{m\times n}$ and a vector $x\in\R^m$ 
applies to each row of $A$ componentwise, that is, $a_{ij} \le x_i$ for all $i,j$.
We denote by $\mathbf{1}$ the matrix or vector of all ones, and by $\norminf{\cdot}$ the infinity norm.

We seek to analyze the effect of errors in the computation of $h_\ell$. 
To do so, we will use the following generic error model. 
\begin{model}\label{model}
	We assume that $\hh_0 = h_0 = x$ and that each computed $\hh_\ell$ satisfies
	\begin{equation}\label{eq.hhl}
		\hh_\ell = \phi_\ell\big((W_\ell \circ(\mathbf{1} +\Delta W_\ell))\hh_{\ell-1}\big) \circ(\mathbf{1}+\Delta \phi_\ell),
		\quad \abs{\Delta W_\ell} \le \eps_\ell^W,
		\quad \abs{\Delta \phi_\ell} \le \eps_\ell^\phi,
	\end{equation}
	where $\Delta W_\ell\in\mathbb{R}^{n_\ell\times n_{\ell-1}}$,
	$\Delta\phi_\ell\in\mathbb{R}^{n_\ell}$, 
	$\eps_\ell^W\in\R^{n_\ell}$ is a nonnegative vector
	whose components bound the backward errors incurred in the evaluation of the
	matrix--vector product with $W_\ell$, so that $(\eps_\ell^W)_i = \max_{1 \le
		j \le n_{\ell-1}} \abs{(\Delta W_{\ell})_{ij}} $ for
	$i=1,\dots,n_\ell$, and $ \eps_\ell^\phi  \in \R^{n_\ell}$  is a nonnegative
	vector whose components bound the forward errors incurred in the evaluation of
	$\phi_\ell$. 
\end{model}

Note that Model~\ref{model}, while quite generic, is also very realistic.
Indeed, 
according to the IEEE 754 standard, 
in a floating-point arithmetic with unit roundoff $u$,
the matrix--vector products satisfy a relative backward error bound with
$\eps_\ell^W$ bounded by $n_{\ell-1}u$~\cite{high:ASNA2}. 
As for the activation functions, while the standard
does not prescribe how accurately mathematical functions should be evaluated,
in practice, most implementations satisfy a relative forward error bound of order a small multiple of $u$; see, for example,
 Appendix E of the CUDA programming guide\footnote{\url{https://docs.nvidia.com/cuda/archive/10.1/pdf/CUDA_C_Programming_Guide.pdf}}.

\subsection{Preliminaries}

We will need the following two inequalities on perturbed matrix--vector products.

\begin{lemma}\label{lem.DxA}
	Let $A\in\R^{m\times n}$,  $x\in\R^n$, and $\Delta x \in \R^n$. We have
	\begin{equation}
		\abs{A}\abs{x\circ\Delta x} \le \norminf{\Delta x} \abs{A}\abs{x}.
	\end{equation}
\end{lemma}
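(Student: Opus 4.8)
The plan is to reduce the claimed vector inequality to a family of scalar inequalities, one per coordinate, and verify each by a termwise bound. Since $\abs{A}$, $\abs{x}$, and $\abs{x\circ\Delta x}$ all have nonnegative entries and inequalities between vectors are read componentwise, it suffices to check the inequality in each coordinate $i \in \{1,\dots,m\}$ separately.

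First I would expand the $i$-th component of the left-hand side. Using the definition of the matrix--vector product together with $(x\circ\Delta x)_j = x_j (\Delta x)_j$, we have
\begin{equation*}
\bigl(\abs{A}\abs{x\circ\Delta x}\bigr)_i = \sum_{j=1}^n \abs{a_{ij}}\,\abs{x_j}\,\abs{(\Delta x)_j}.
\end{equation*}
Next I would bound every factor $\abs{(\Delta x)_j}$ by the single constant $\norminf{\Delta x} = \max_{1\le k\le n}\abs{(\Delta x)_k}$. Because each coefficient $\abs{a_{ij}}\abs{x_j}$ is nonnegative, replacing $\abs{(\Delta x)_j}$ by this upper bound can only increase each summand, so
\begin{equation*}
\sum_{j=1}^n \abs{a_{ij}}\,\abs{x_j}\,\abs{(\Delta x)_j} \le \sum_{j=1}^n \abs{a_{ij}}\,\abs{x_j}\,\norminf{\Delta x} = \norminf{\Delta x}\sum_{j=1}^n \abs{a_{ij}}\,\abs{x_j}.
\end{equation*}
The final sum is exactly the $i$-th component of $\norminf{\Delta x}\,\abs{A}\abs{x}$, which proves the inequality in coordinate $i$; since $i$ is arbitrary, the vector inequality follows.

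There is no substantial obstacle here: the result is essentially the monotonicity of a matrix--vector product with nonnegative coefficients under a uniform scalar bound on the entries being multiplied. The only point requiring care is the bookkeeping — expanding the Hadamard product, the absolute values, and the matrix--vector product consistently under the componentwise conventions fixed earlier, and noting that it is precisely the nonnegativity of the entries of $\abs{A}$ and $\abs{x}$ that allows the scalar factor $\norminf{\Delta x}$ to be pulled out of the sum without reversing the inequality.
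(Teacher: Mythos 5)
Your proof is correct and follows essentially the same route as the paper's: expand the $i$th component of $\abs{A}\abs{x\circ\Delta x}$ as $\sum_{j}\abs{a_{ij}}\abs{x_j}\abs{\Delta x_j}$, bound each $\abs{\Delta x_j}$ by $\norminf{\Delta x}$, and pull the constant out of the nonnegative sum. The only difference is presentational — you spell out the nonnegativity justification that the paper leaves implicit.
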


\begin{proof}
	Since the inequality is componentwise, it suffices to prove it for an arbitrary index $i$, $1 \le i \le m$. 
	The $i$th component of $ \abs{A}\abs{x\circ\Delta x} $ satisfies
	\[
	(\abs{A}\abs{x\circ\Delta x})_i = \sum_{j=1}^{n} \abs{a_{ij} x_j \Delta x_j} 
	\le  \norminf{\Delta x} \sum_{j=1}^{n} \abs{a_{ij}} \abs{x_j} = \norminf{\Delta x} (\abs{A}\abs{x})_i. \qedhere
	\]
\end{proof}

\begin{lemma}\label{lem.DA}
	Let $A\in\R^{m\times n}$, $x\in\R^n$, and $\Delta A \in\R^{m\times n}$ such that $\abs{\Delta A} \le \eps^A  \in \R^m$ with  $(\eps^A)_i  = \max_{1\le j \le n} \abs{\Delta a_{ij}}$. We have 
	\begin{equation}
		\abs{A \circ\Delta A}\abs{x} \le (\abs{A}\abs{x}) \circ \eps^A.
	\end{equation}
	
\end{lemma}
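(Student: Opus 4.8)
The plan is to follow exactly the template of Lemma~\ref{lem.DxA}: since the claimed inequality is between two vectors in $\R^m$ and is asserted componentwise, it suffices to fix an arbitrary row index $i$ with $1 \le i \le m$ and verify that the $i$th entry of $\abs{A\circ\Delta A}\abs{x}$ is bounded by the $i$th entry of $(\abs{A}\abs{x})\circ\eps^A$. The whole lemma then reduces to a one-line termwise estimate over the column index $j$.

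First I would expand the left-hand side. Because the Hadamard product acts entrywise, $(A\circ\Delta A)_{ij} = a_{ij}\Delta a_{ij}$, so $\abs{A\circ\Delta A}_{ij} = \abs{a_{ij}}\abs{\Delta a_{ij}}$, and the $i$th component of the matrix--vector product is $\sum_{j=1}^n \abs{a_{ij}}\abs{\Delta a_{ij}}\abs{x_j}$. The key step is then to invoke the hypothesis: by definition $(\eps^A)_i = \max_{1\le j\le n}\abs{\Delta a_{ij}}$, so $\abs{\Delta a_{ij}} \le (\eps^A)_i$ uniformly in $j$. Since $(\eps^A)_i$ does not depend on $j$, I can pull it out of the sum to obtain $\sum_{j=1}^n \abs{a_{ij}}\abs{\Delta a_{ij}}\abs{x_j} \le (\eps^A)_i \sum_{j=1}^n \abs{a_{ij}}\abs{x_j} = (\eps^A)_i (\abs{A}\abs{x})_i$. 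Recognizing the right-hand side as exactly the $i$th entry of $(\abs{A}\abs{x})\circ\eps^A$ closes the argument.

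I do not expect any genuine obstacle here: the result is a routine componentwise bound, and the only point requiring care is the bookkeeping around the two different product conventions in play. Specifically, one must not confuse the Hadamard product $A\circ\Delta A$ of two $m\times n$ matrices (which is what appears on the left) with the Hadamard product $(\abs{A}\abs{x})\circ\eps^A$ of two length-$m$ vectors (on the right), and one must use the paper's stated convention that $\eps^A$ is the rowwise maximum of $\abs{\Delta A}$, which is precisely what makes the factor $(\eps^A)_i$ independent of $j$ and hence extractable from the summation.
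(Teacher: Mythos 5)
Your proof is correct and is essentially identical to the paper's: both fix an arbitrary row index $i$, expand $(\abs{A\circ\Delta A}\abs{x})_i = \sum_{j=1}^n \abs{a_{ij}}\abs{\Delta a_{ij}}\abs{x_j}$, bound $\abs{\Delta a_{ij}}$ by the rowwise maximum $(\eps^A)_i$, and factor it out of the sum to recognize $((\abs{A}\abs{x})\circ\eps^A)_i$. No gaps; your remark distinguishing the matrix Hadamard product on the left from the vector Hadamard product on the right is sound bookkeeping, nothing more is needed.
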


\begin{proof}
	Once again, since the inequality is componentwise, it suffices to prove it for an arbitrary index $i$, $1 \le i \le m$. 
	The $i$th component of $\abs{A \circ\Delta A}\abs{x}$ satisfies
	\[
	(\abs{A \circ\Delta A}\abs{x})_i = 
	\sum_{j=1}^{n} \abs{a_{ij} \Delta a_{ij} x_j}  
	\le  \sum_{j=1}^{n} \abs{a_{ij}} \abs{x_j} (\eps^A)_i 
	= ((\abs{A}\abs{x}) \circ \eps^A)_i. \qedhere
	\]
\end{proof}

Lemma~\ref{lem.DxA} states that multiplying a nonnegative matrix $\abs{A}$ with
a nonnegative vector $\abs{x}$ perturbed componentwise by $\abs{\Delta x}$
yields a result $\abs{A}\abs{x}$ whose $i$th component is perturbed by the largest of the components
of $\abs{\Delta x}$. 
Lemma~\ref{lem.DA} shows that a similar result holds when multiplying a nonnegative matrix $\abs{A}$ perturbed componentwise by
$\abs{\Delta A}$ with
a nonnegative vector $\abs{x}$: this 
yields a result $\abs{A}\abs{x}$ whose $i$th component is perturbed by the largest of the components
of the $i$th row of $\abs{\Delta A}$, $(\eps^A)_i  = \max_{1\le j \le n} \abs{\Delta a_{ij}}$.
In other words, perturbed matrix--vectors (with a componentwise perturbation on either the matrix or the vector)
contaminate the result by spreading the perturbation across its components. 

We next define two key quantities that will appear in the analysis: the condition numbers
of a matrix--vector product and of a function.

\paragraph{\textbf{Condition number of a matrix--vector product.}}
For $A\in\R^{m\times n}$ and  $x \in \R^n$, we have
\begin{equation}\label{eq.kappaAx}
	\abs{A}\abs{x} = \kappa_{A,x} \circ \abs{Ax},
\end{equation}
where $\kappa_{A,x}\in\R^m$ is the vector whose $i$th component
\begin{equation}\label{eq.kappaAx-def}
	(\kappa_{A,x})_i =  \frac{(\abs{A}\abs{x})_i}{\abs{Ax}_i}
\end{equation}
is the condition number of the dot product between the $i$th row of $A$ and $x$,
which reflects the possibility of cancellation~\cite[sect.~1.7]{high:ASNA2} in
the computation of $Ax$ when $\abs{A}\abs{x} > \abs{Ax}$.

\paragraph{\textbf{Condition number of a function.}}
During the evaluation of $\phi_\ell(v)$ for some vector $v\in\R^{n_\ell}$,
we will also need to express a relative perturbation $\Delta v$ on the input $v$ as a relative perturbation $\Delta\phi_\ell(v)$
on the output $\phi_\ell(v)$. To do so, we introduce a function $\kappa_{\phi_\ell} : \R^{n_\ell} \mapsto \R_{+}^{n_\ell}$
that satisfies
\begin{equation}\label{eq.phi}
	\phi_\ell\big(v\circ(\mathbf{1}+\Delta v)\big) = \phi_\ell(v) \circ \big(\mathbf{1}+\kappa_{\phi_\ell}(v)\circ\Delta v'), \quad \Delta v'=\pm \Delta v.
\end{equation}
Equality \eqref{eq.phi} is stating that a relative perturbation $\Delta v$ on the input $v$
leads to a relative perturbation on the output $\phi_\ell(v)$ of magnitude
$\kappa_{\phi_\ell}(v)\abs{\Delta v}$ (note that we introduce a perturbation
$\Delta v'$ to account for a possible change of sign).

To obtain a more explicit expression of $\kappa_\phi$, consider the case where $v\in\R$. Then \eqref{eq.phi} becomes $\phi_\ell(v(1+\Delta v)) = \phi_\ell(v)(1+\kappa_{\phi_\ell}(v)\Delta v)$.
Assuming first that $\phi_\ell(v)\Delta v \ne 0$, this yields the expression
\begin{equation}
	\kappa_{\phi_\ell}(v) = \frac{\abs{\phi_\ell(v(1+\Delta v)) - \phi_\ell(v)}}{\abs{\phi_\ell(v)\Delta v}}.
\end{equation}
Taking the limit as $\Delta v$ goes to zero gives the condition number of $\phi_\ell$ at $v$, $|v\phi'_\ell(v)/\phi_\ell(v)|$ 
which shows that $\kappa_{\phi_\ell}$ can be interpreted as the condition number of $\phi_\ell$ to first order~\cite[sect.~1.6]{high:ASNA2}.
The case where $\phi_\ell(v)\Delta v=0$ requires special care.
If $\Delta v = 0$, or if
$\phi_\ell(v)=\phi_\ell(v(1 + \Delta v))=0$, then \eqref{eq.phi} is satisfied for
any $\kappa_{\phi_{\ell}}$, so we may in particular define $\kappa_{\phi_\ell} = 0$.
If $\phi_\ell(v)=0$ but $\phi_\ell(v(1 +  \Delta v)) \ne 0$, then there does not
exist any finite $\kappa_{\phi_\ell}$ such that \eqref{eq.phi} is satisfied, and so we define $\kappa_{\phi_\ell} = \infty$.
To summarize, we have the explicit expression of $\kappa_{\phi_\ell}$
\begin{equation}\label{eq.kappaphi}
	\kappa_{\phi_\ell}(v) = \left\{ \begin{array}{ll}
		\frac{\abs{\phi_\ell(v(1+\Delta v)) - \phi_\ell(v)}}{\abs{\phi_\ell(v)\Delta v}} & \text{ if } \phi_\ell(v)\Delta v\ne0                                \\
		0                                                                              & \text{ if } \phi_\ell(v)=\phi_\ell(v(1+\Delta v))                      \\
		\infty                                                                         & \text{ if } \phi_\ell(v)=0 \text{ and } \phi_\ell(v(1+\Delta v))\ne 0.
	\end{array} \right.
\end{equation}
Note that the fact that $\kappa_{\phi_\ell}$ can take $\infty$ as a value is
largely an artifact of considering relative perturbations. For example, for
ReLU activation, $\kappa_{\phi_\ell}(v) = \infty$ occurs only when 
we simultaneously have $v < 0$ and $v(1 + \Delta v) > 0$.
These conditions are met when $\Delta v < -1$, which corresponds to a
relative error $\abs{\Delta v}$ greater than 1. 

Going back to the general case where $\phi_\ell$ takes $v\in\R^{n_\ell}$ as input,
since \eqref{eq.phi} is a componentwise definition, we obtain the expression of the $i$th component of $\kappa_{\phi_\ell}(v)$
by applying \eqref{eq.kappaphi} to $\kappa_{\phi_\ell}(v_i)$.

\subsection{The analysis}

We are now ready to analyze the computation of $h_\ell$. We proceed by induction: assuming that the computed $\hh_{\ell-1}$ satisfies
\begin{equation}\label{eq.hhl-1}
	\hh_{\ell-1} = h_{\ell-1} \circ (\mathbf{1}+\Delta h_{\ell-1}), \quad \abs{\Delta h_{\ell-1}} \le \eps_{\ell-1}^h \in \R^{n_{\ell -1}}
\end{equation}
for some error term $\Delta h_{\ell-1}$ bounded componentwise by
$\eps_{\ell-1}^h$, we seek to determine $\Delta h_\ell$ and its corresponding
bound $\eps_\ell^h$.
Defining $v_\ell = W_\ell  h_{\ell-1}$ and injecting \eqref{eq.hhl-1} into \eqref{eq.hhl}, we obtain
\begin{align}
	\hh_\ell
	& = \phi_\ell\Big( \big(W_\ell  \circ (\mathbf{1} +\Delta W_\ell)\big) \big(h_{\ell-1}\circ(\mathbf{1}+\Delta h_{\ell-1})\big)\Big)
	\circ(\mathbf{1}+\Delta \phi_\ell) \nonumber                   \\
	& = \phi_\ell\Big( v_\ell +  \big(W_\ell \circ \Delta W_\ell \big) h_{\ell-1} 
	+ W_\ell \big( h_{\ell-1}\circ\Delta h_{\ell-1} \big) 
	+ \big(W_\ell \circ  \Delta W_\ell \big) \big(h_{\ell-1}\circ\Delta h_{\ell-1}\big) \Big) \circ(\mathbf{1}+\Delta \phi_\ell) \nonumber \\
	& = \phi_\ell\big(v_\ell \circ(\mathbf{1}+\Delta v_\ell)\big) \circ(\mathbf{1}+\Delta \phi_\ell), \label{eq.hhell}
\end{align}
with 
\begin{equation}
	\abs{\Delta v_\ell} \le 
	\Big(\abs{(W_\ell \circ \Delta W_\ell) h_{\ell-1}} 
	+ \abs{W_\ell ( h_{\ell-1} \circ \Delta h_{\ell-1} )}
	+ \abs{(W_\ell \circ \Delta W_\ell) (h_{\ell-1} \circ \Delta h_{\ell-1} )} \Big) \oslash \abs{v_\ell}.
\end{equation}
Using Lemmas~\ref{lem.DxA} and~\ref{lem.DA} together with~\eqref{eq.hhl} and~\eqref{eq.hhl-1}, we have
\begin{equation}
	\abs{\Delta v_\ell} \le (\abs{W_\ell} \abs{h_{\ell-1}}) \circ \eps_\ell^W \oslash \abs{v_\ell}
	+  \norminf{\eps^h_{\ell-1}} (\abs{W_\ell}\abs{h_{\ell-1}}) \oslash \abs{v_\ell} 
	+  \norminf{\eps^h_{\ell-1}} (\abs{W_\ell}\abs{h_{\ell-1}}) \circ \eps_\ell^W \oslash \abs{v_\ell}.
\end{equation}
By \eqref{eq.kappaAx} we have 
\begin{equation}\label{eq.kappav}
	(\abs{W_\ell}\abs{h_{\ell-1}}) \oslash \abs{v_\ell} = \kappa_{W_\ell, h_{\ell-1}} =: \kappa_{v_\ell},
\end{equation}
where, for the sake of readability, we abbreviate  $\kappa_{W_\ell, h_{\ell-1}}$ as $\kappa_{v_\ell}$.
We thus obtain
\begin{align}
	\abs{\Delta v_\ell} &\le \kappa_{v_\ell} \circ \big(\eps_\ell^W + \norminf{ \eps_{\ell-1}^h} \mathbf{1} +  \norminf{ \eps_{\ell-1}^h} \eps_\ell^W\big)
	\nonumber \\
	&=  \kappa_{v_\ell} \circ \big(\eps_\ell^W + \norminf{ \eps_{\ell-1}^h} ( \mathbf{1} + \eps_\ell^W) \big). \label{eq.Deltav}
\end{align}
Using \eqref{eq.phi} in \eqref{eq.hhell}, we have
\begin{align*}
	\hh_\ell & = \phi_\ell(v_\ell)\circ(\mathbf{1}+\kappa_{\phi_\ell}(v_\ell)\circ\pm\Delta v_\ell) \circ(\mathbf{1}+\Delta \phi_\ell)\nonumber                                                  \\
	& = h_\ell\circ(\mathbf{1}+\kappa_{\phi_\ell}(v_\ell)\circ\pm\Delta v_\ell) \circ(\mathbf{1}+\Delta \phi_\ell) \nonumber                                                          \\
	& = h_\ell\circ(\mathbf{1}+\kappa_{\phi_\ell}(v_\ell)\circ\pm\Delta v_\ell + \Delta \phi_\ell +\kappa_{\phi_\ell}(v_\ell)\circ\pm\Delta v_\ell \circ \Delta \phi_\ell)  \nonumber \\
	& = h_\ell\circ(\mathbf{1}+\Delta h_\ell)
\end{align*}
with
\begin{align}\label{eq.Deltah-0}
	\abs{\Delta h_\ell} 
	&\le \kappa_{\phi_{\ell}}(v_\ell) \circ \abs{\Delta v_\ell} 
	+ \abs{\Delta \phi_\ell} + \kappa_{\phi_{\ell}}(v_\ell) \circ \abs{\Delta v_\ell} \circ \abs{\Delta \phi_\ell} \nonumber \\
	&= \kappa_{\phi_{\ell}}(v_\ell) \circ \abs{\Delta v_\ell} \circ (\mathbf{1}+\abs{\Delta \phi_\ell}) + \abs{\Delta \phi_\ell} 
\end{align}
Combining \eqref{eq.hhl} and \eqref{eq.Deltav} into \eqref{eq.Deltah-0}, we finally obtain
\begin{equation}\label{eq.Deltah}
	\abs{\Delta h_\ell} 
	\le \kappa_{\phi_\ell}(v_\ell)\circ\kappa_{v_\ell}\circ\big(\eps_\ell^W +  \norminf{ \eps_{\ell-1}^h} (\mathbf{1} 
	+  \eps_\ell^W)  \big)\circ(\mathbf{1}+\eps_\ell^\phi) + \eps_\ell^\phi =: \eps_\ell^h.
\end{equation}

We summarize our analysis in the following theorem.

\begin{theorem}\label{thm.main}
	Let $h_\ell = \phi_\ell(W_\ell h_{\ell-1})$ be computed inexactly such that the
	computed $\hh_\ell$ satisfies Model~\ref{model} and assume that $\hh_{\ell-1}$ is computed
  with a relative error bounded by $\eps_{\ell-1}^h$ as defined in \eqref{eq.hhl-1}.
	Then, we have
	\[
	\hh_\ell = h_\ell \circ (\mathbf{1}+\Delta h_\ell), \quad
	\abs{\Delta h_\ell} \le \eps_\ell^h,
	\]
	where $\eps_\ell^h$ satisfies the recurrence relation
	\[
	\eps_\ell^h = \kappa_{\phi_\ell}(v_\ell)\circ\kappa_{v_\ell}\circ\big(\eps_\ell^W + \norminf{ \eps_{\ell-1}^h} ( \mathbf{1} +   \eps_\ell^W ) \big)\circ(\mathbf{1}+\eps_\ell^\phi) + \eps_\ell^\phi,
	\]
	where $\kappa_{\phi_\ell}$ satisties \eqref{eq.phi} and $\kappa_{v_\ell}$ is defined in \eqref{eq.kappav}.
\end{theorem}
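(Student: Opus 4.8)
The plan is to establish the inductive step of a layer-by-layer argument: assuming the computed $\hh_{\ell-1}$ satisfies the componentwise relative bound \eqref{eq.hhl-1} with error vector $\eps_{\ell-1}^h$, I would show that $\hh_\ell$ satisfies the analogous bound with $\eps_\ell^h$ given by the stated recurrence. The induction is anchored at $\ell=0$, where Model~\ref{model} gives $\hh_0 = h_0 = x$, so we may take $\Delta h_0 = 0$ and $\eps_0^h = 0$. The substance of the theorem is therefore a single propagation step, tracking how two independent error sources—the backward error $\Delta W_\ell$ on the weights and the incoming error $\Delta h_{\ell-1}$ on the previous output—combine and then pass through the activation.

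The first move is to inject the inductive hypothesis \eqref{eq.hhl-1} into the error model \eqref{eq.hhl}. Writing $v_\ell = W_\ell h_{\ell-1}$ and expanding the perturbed product $(W_\ell \circ (\mathbf{1}+\Delta W_\ell))(h_{\ell-1}\circ(\mathbf{1}+\Delta h_{\ell-1}))$ produces $v_\ell$ plus three perturbation terms: one arising from $\Delta W_\ell$ alone, one from $\Delta h_{\ell-1}$ alone, and a mixed second-order term. Dividing by $\abs{v_\ell}$ recasts this as a relative perturbation $\Delta v_\ell$ on $v_\ell$, which I would then bound componentwise.

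The key technical step is to control each of these three terms, and here Lemmas~\ref{lem.DxA} and~\ref{lem.DA} do precisely the required work. The term involving $\Delta h_{\ell-1}$ is controlled by $\norminf{\eps_{\ell-1}^h}(\abs{W_\ell}\abs{h_{\ell-1}})$ via Lemma~\ref{lem.DxA}, the term involving $\Delta W_\ell$ by $(\abs{W_\ell}\abs{h_{\ell-1}}) \circ \eps_\ell^W$ via Lemma~\ref{lem.DA}, and the mixed term by the product of both. Crucially, every term carries the common factor $\abs{W_\ell}\abs{h_{\ell-1}}$, and dividing it by $\abs{v_\ell}$ yields exactly the inner-product condition number $\kappa_{v_\ell}$ through the definition \eqref{eq.kappaAx}. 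Factoring out $\kappa_{v_\ell}$ gives the bound \eqref{eq.Deltav} on $\abs{\Delta v_\ell}$.

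It then remains to pass this perturbation through the activation and compose with the activation error $\Delta\phi_\ell$. I would invoke the defining property \eqref{eq.phi} of $\kappa_{\phi_\ell}$ to convert the relative input perturbation $\Delta v_\ell$ into a relative output perturbation of magnitude $\kappa_{\phi_\ell}(v_\ell)\circ\abs{\Delta v_\ell}$; multiplying by $(\mathbf{1}+\Delta\phi_\ell)$, expanding, and applying the triangle inequality collects everything into the recurrence for $\eps_\ell^h$. The main subtlety to watch is bookkeeping rather than any deep difficulty: one must consistently retain the second-order cross terms so that the result is a genuine bound and not merely a first-order approximation, handle the sign ambiguity of \eqref{eq.phi} (the $\Delta v'=\pm\Delta v$) by working throughout with absolute values, and respect the componentwise conventions for vector and matrix inequalities so that the scalar factor $\norminf{\eps_{\ell-1}^h}$ emerges correctly when Lemma~\ref{lem.DxA} spreads the incoming error across components.
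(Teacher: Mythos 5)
Your proposal is correct and follows essentially the same route as the paper's own derivation: injecting the inductive hypothesis \eqref{eq.hhl-1} into \eqref{eq.hhl}, bounding the three perturbation terms with Lemmas~\ref{lem.DxA} and~\ref{lem.DA}, identifying $\kappa_{v_\ell}$ via \eqref{eq.kappaAx}, and then passing through the activation with \eqref{eq.phi} before collecting terms. The subtleties you flag (keeping the second-order cross terms, the $\pm$ sign in \eqref{eq.phi}, and the componentwise conventions) are exactly the points the paper handles in \eqref{eq.Deltav}--\eqref{eq.Deltah}.
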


\subsection{Interpretation of the analysis and consequences}

We now explain why this analysis reveals important features of the behavior of the forward propagation
under error perturbations, and motivates the use of mixed precision.
Theorem~\ref{thm.main} shows that, to first order, we have the recurrence
\begin{equation}\label{eq.recurr}
	\eps_\ell^h = \kappa_{\phi_\ell}(v_\ell) \circ \kappa_{v_\ell} \circ (\eps_\ell^W + \norminf{ \eps_{\ell-1}^h} \mathbf{1}) + \eps_\ell^\phi.
\end{equation}
This means that at layer $\ell$, the previously accumulated error
$\eps_{\ell-1}^h$ undergoes a series of transformations due to the propagation process. First, we add the local backward error $\eps_\ell^W$ accounting for the inexact matrix--vector product. Then, the combined error is scaled componentwise by the condition numbers $\kappa_{\phi_\ell}(v_\ell)$ and $\kappa_{v_\ell}$, which quantify the sensitivity of the layer's operations to input perturbations. This scaling reflects how the local structure of the layer amplifies the existing errors. Finally, we add the error $\eps_\ell^\phi$ accounting for the inexact evaluation of the activation function.

We can derive from recurrence \eqref{eq.recurr} a simpler scalar recurrence on $\norminf{\eps_\ell^h}$:
\begin{equation}
	\label{eq:eps_ell}
	\norminf{\eps_\ell^h} \le \norminf{\kappa_{\phi_\ell}(v_\ell) \circ \kappa_{v_\ell} \circ \eps_\ell^W}
	+ \norminf{\kappa_{\phi_\ell}(v_\ell) \circ \kappa_{v_\ell}} \norminf{\eps_{\ell-1}^h}
	+ \norminf{\eps_\ell^\phi}.
\end{equation}
This yields the following corollary.

\begin{corollary}\label{corollary}
	For all $\ell=1, \dots, L$,
	let
	\[
	\hh_\ell = h_\ell \circ (\mathbf{1}+\Delta h_\ell), \quad
	\abs{\Delta h_\ell} \le \eps_\ell^h,
	\]
	and assume $\eps_\ell^h$ satisfies the recurrence relation \eqref{eq:eps_ell}.
	Then the computed final output of the network, $\hh_L$, satisfies
	\[
	\hh_L = h_L \circ (\mathbf{1}+\Delta h_L), \quad
	\abs{\Delta h_L} \le \eps_L^h,
	\]
	with
	\begin{equation}\label{eq.recurr-simpl}
		\norminf{\eps_L^h} \le \sum_{\ell=1}^L\bigg[\Big(\prod_{k=\ell+1}^L \norminf{\kappa_{\phi_k}(v_k) \circ \kappa_{v_k}}\Big)
		\Big(\norminf{\kappa_{\phi_\ell}(v_\ell) \circ \kappa_{v_\ell} \circ \eps_\ell^W} + \norminf{\eps_\ell^\phi} \Big)\bigg].
	\end{equation}
\end{corollary}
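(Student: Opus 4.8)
The plan is to recognize that the scalar recurrence \eqref{eq:eps_ell} is a first-order linear recurrence in $\norminf{\eps_\ell^h}$ and to solve it explicitly. Writing $a_\ell = \norminf{\eps_\ell^h}$, $c_\ell = \norminf{\kappa_{\phi_\ell}(v_\ell) \circ \kappa_{v_\ell}}$, and $b_\ell = \norminf{\kappa_{\phi_\ell}(v_\ell) \circ \kappa_{v_\ell} \circ \eps_\ell^W} + \norminf{\eps_\ell^\phi}$, the recurrence reads $a_\ell = c_\ell a_{\ell-1} + b_\ell$. The base case is supplied by Model~\ref{model}: since $\hh_0 = h_0 = x$, we have $\Delta h_0 = 0$ and hence $a_0 = \norminf{\eps_0^h} = 0$.

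First I would establish the closed form $a_L = \sum_{\ell=1}^L \left(\prod_{k=\ell+1}^L c_k\right) b_\ell$ by induction on $L$, adopting the convention that the empty product (arising when $\ell = L$) equals $1$. The base case $L = 1$ gives $a_1 = c_1 a_0 + b_1 = b_1$, which matches the claimed sum, as it there reduces to its single $\ell = 1$ term with empty product. For the inductive step, assuming the formula for $L - 1$, I would compute $a_L = c_L a_{L-1} + b_L$, distribute the factor $c_L$ across the inductive sum so that each product $\prod_{k=\ell+1}^{L-1} c_k$ becomes $\prod_{k=\ell+1}^{L} c_k$, and then append the new $\ell = L$ term $b_L$; collecting terms reproduces the formula for $L$.

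Finally I would substitute the definitions of $c_k$ and $b_\ell$ back into the closed form, which yields exactly \eqref{eq.recurr-simpl}, and note that the first two conclusions of the corollary (that $\hh_L = h_L \circ (\mathbf{1} + \Delta h_L)$ with $\abs{\Delta h_L} \le \eps_L^h$) are simply the $\ell = L$ instance of Theorem~\ref{thm.main}, so nothing further is needed there.

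I do not anticipate a genuine obstacle: the only points requiring care are the bookkeeping of the empty-product convention in the base case and the correct handling of the initial condition $\eps_0^h = 0$, which must be drawn from Model~\ref{model} rather than from the recurrence itself (the recurrence \eqref{eq:eps_ell} is only asserted for $\ell \ge 1$). The step most likely to hide an off-by-one error is the inductive redistribution of $c_L$ across the products, so I would verify the index ranges $\prod_{k=\ell+1}^{L-1} c_k \mapsto \prod_{k=\ell+1}^{L} c_k$ explicitly.
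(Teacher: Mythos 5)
Your proposal is correct and follows essentially the same route as the paper's own proof: an induction on $L$ that uses $\eps_0^h = 0$ (from $\hh_0 = h_0$ in Model~\ref{model}) for the base case and, in the inductive step, distributes the factor $\norminf{\kappa_{\phi_L}(v_L) \circ \kappa_{v_L}}$ across the sum for $L-1$ to shift the products $\prod_{k=\ell+1}^{L-1}$ to $\prod_{k=\ell+1}^{L}$. Your scalar abbreviations $a_\ell, b_\ell, c_\ell$ merely streamline the same argument.
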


\begin{proof}
	The proof is by induction on $L$. For $L=1$,  using \eqref{eq:eps_ell} gives
	\begin{align*}
		\norminf{\eps_1^h} \le \norminf{\kappa_{\phi_1}(v_1) \circ \kappa_{v_1} \circ \eps_1^W}
		+ \norminf{\kappa_{\phi_1}(v_1) \circ \kappa_{v_1}} \norminf{\eps_{0}^h}
		+ \norminf{\eps_1^\phi}.
	\end{align*}
	Since $\hh_0 = h_0$, $\eps_{0}^h$ is zero and~\eqref{eq.recurr-simpl} holds for $L=1$. 
	For the inductive step, assume that~\eqref{eq.recurr-simpl} is true for $L-1$. By \eqref{eq:eps_ell} we have
	\[
	\norminf{\eps_L^h} \le \norminf{\kappa_{\phi_L}(v_L) \circ \kappa_{v_L} \circ \eps_L^W}
	+ \norminf{\kappa_{\phi_L}(v_L) \circ \kappa_{v_L}} \norminf{\eps_{L-1}^h}
	+ \norminf{\eps_L^\phi} 
	\]
	and by the inductive assumption we thus obtain
	\begin{align*}
		\norminf{\eps_L^h} &\le  \norminf{\kappa_{\phi_L}(v_L) \circ \kappa_{v_L} \circ \eps_L^W}
		+ \norminf{\eps_L^\phi} \\
		&\qquad + \norminf{\kappa_{\phi_L}(v_L) \circ \kappa_{v_L}} \sum_{\ell=1}^{L-1}\bigg[\Big(\prod_{k=\ell+1}^{L-1} \norminf{\kappa_{\phi_k}(v_k) \circ \kappa_{v_k}}\Big)
		\Big(\norminf{\kappa_{\phi_\ell}(v_\ell) \circ \kappa_{v_\ell} \circ \eps_\ell^W} + \norminf{\eps_\ell^\phi} \Big)\bigg] \\
		&=  \norminf{\kappa_{\phi_L}(v_L) \circ \kappa_{v_L} \circ \eps_L^W}
		+ \norminf{\eps_L^\phi} \\
		&\qquad + \sum_{\ell=1}^{L-1}\bigg[\Big(\prod_{k=\ell+1}^{L} \norminf{\kappa_{\phi_k}(v_k) \circ \kappa_{v_k}}\Big)
		\Big(\norminf{\kappa_{\phi_\ell}(v_\ell) \circ \kappa_{v_\ell} \circ \eps_\ell^W} + \norminf{\eps_\ell^\phi} \Big)\bigg] \\
		&= \sum_{\ell=1}^L\bigg[\Big(\prod_{k=\ell+1}^L \norminf{\kappa_{\phi_k}(v_k) \circ \kappa_{v_k}}\Big)
		\Big(\norminf{\kappa_{\phi_\ell}(v_\ell) \circ \kappa_{v_\ell} \circ \eps_\ell^W} + \norminf{\eps_\ell^\phi} \Big)\bigg]. \qedhere
	\end{align*}
\end{proof}

Minimizing the error bound $\norminf{\eps_L^h}$ on the final output of the network thus amounts to minimizing
each of the error terms in sum \eqref{eq.recurr-simpl}. Assuming that the input $x$ and the weights of the network $W_\ell$ are fixed,
the only quantities under our control in this expression are $\eps_\ell^W$ and $\eps_\ell^\phi$, that is, the precision at which
we evaluate the matrix--vector products and the activation functions. We are interested in using the lowest possible precisions
while still achieving an error under a given accuracy target:
$\norminf{\eps_L^h} \le \varepsilon$. To do so, it seems sensible to equilibrate as much as possible the errors
on each of the terms in \eqref{eq.recurr-simpl}, that is, 
\begin{equation}\label{eq.precchoice}
	\Big(\prod_{k=\ell+1}^L \norminf{\kappa_{\phi_k}(v_k) \circ \kappa_{v_k}}\Big)
	\Big(\norminf{\kappa_{\phi_\ell}(v_\ell) \circ \kappa_{v_\ell} \circ \eps_\ell^W} + \norminf{\eps_\ell^\phi} \Big) \le \varepsilon/L.
\end{equation}

Equation
\eqref{eq.precchoice} shows that the errors incurred at layer $\ell$ are multiplied by the condition numbers of all the sucessive layers,
$\prod_{k=\ell+1}^L \norminf{\kappa_{\phi_k}(v_k) \circ \kappa_{v_k}}$. In principle, this quantity may vary across layers (in fact, 
it decreases monotonically as $\ell$ increases). However, because the errors are taken in infinity norm, only the maximum error components of
subsequent layers play a role: the potential variations across components are smudged together.
Since this term is moreover not easy to compute or estimate in practice, we decide to ignore it and rather consider the following criterion:
\begin{equation}\label{eq.precchoice2}
	\norminf{\kappa_{\phi_\ell}(v_\ell) \circ \kappa_{v_\ell} \circ \eps_\ell^W} + \norminf{\eps_\ell^\phi} \le \varepsilon/L.
\end{equation}

From this, we can immediately notice that the errors $\eps_\ell^\phi$ from the
activation functions appear in the infinity norm.
This suggests that it may be meaningless to
vary the precision of the activations between different components, because
only the maximum error component from the previous layer
is propagated; thus we may as well
compute all the components in the same precision. 

On the other hand, $\eps_\ell^W$ is multiplied componentwise by the condition number
\begin{equation}\label{kappal}
	\kappa_\ell := \kappa_{\phi_\ell}(v_\ell) \circ \kappa_{v_\ell},
\end{equation}
so we should try to balance each component of ${\kappa_{\ell}\circ\eps_\ell^W}$ to minimize their
maximum.
Therefore, we should choose the precision of the inner product with the $i$th row of $W_\ell$ to be inversely proportional
to the $i$th component of $\kappa_{\ell}$. This represents a good opportunity to introduce mixed precision in the forward pass: we expect the components of $\kappa_\ell$ to have a large dynamic range. Indeed, 
for typical activation functions such as ReLU or $\tanh$, Figure \ref{fig:cond} shows that
$\kappa_{\phi_\ell} \le 1$, and some of its components may be much smaller than
1, meaning that some inner products can be computed
in very low precision, while still maintaining a high accuracy on the overall computation.

In the next section we develop a mixed precision algorithm based on this reasoning.

\begin{figure}
	\centering
	\includegraphics[width=0.48\linewidth]{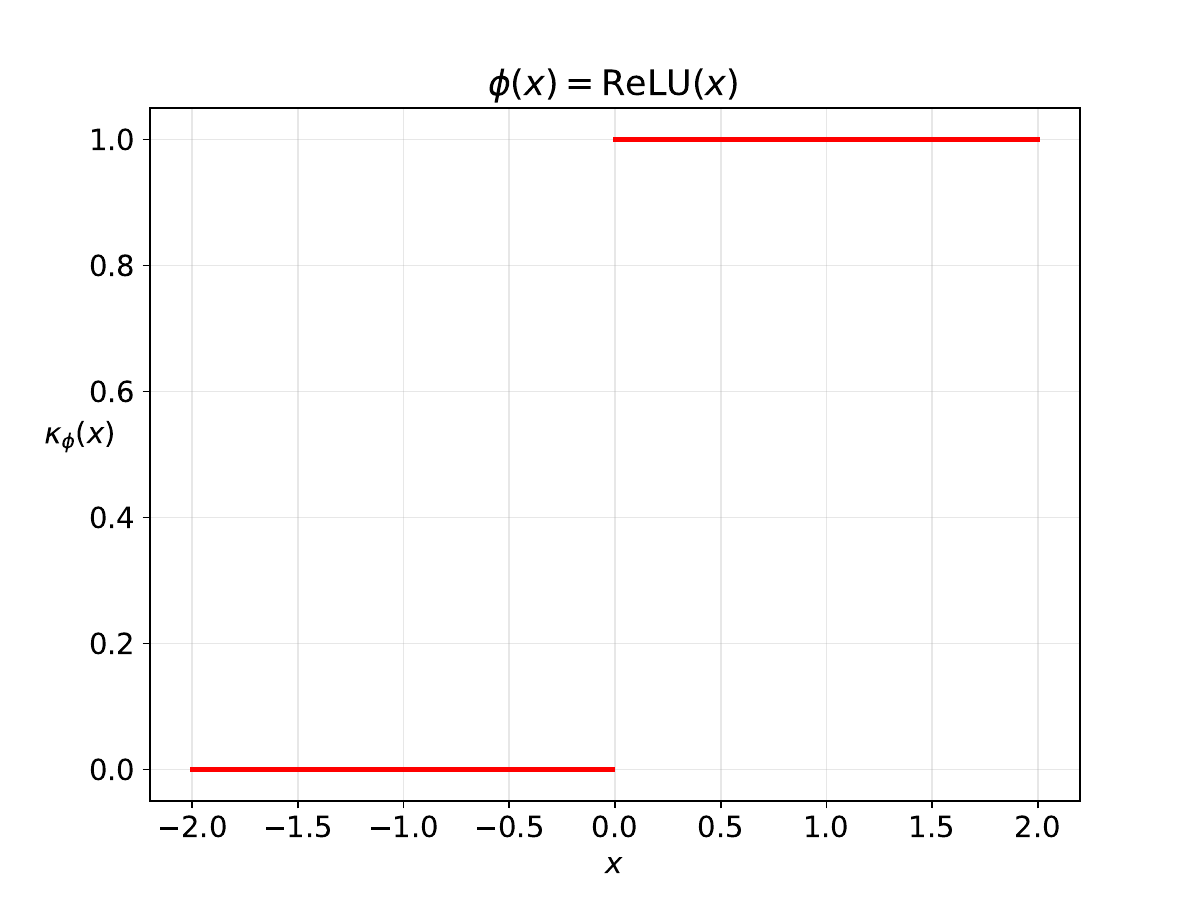}%
	\includegraphics[width=0.48\linewidth]{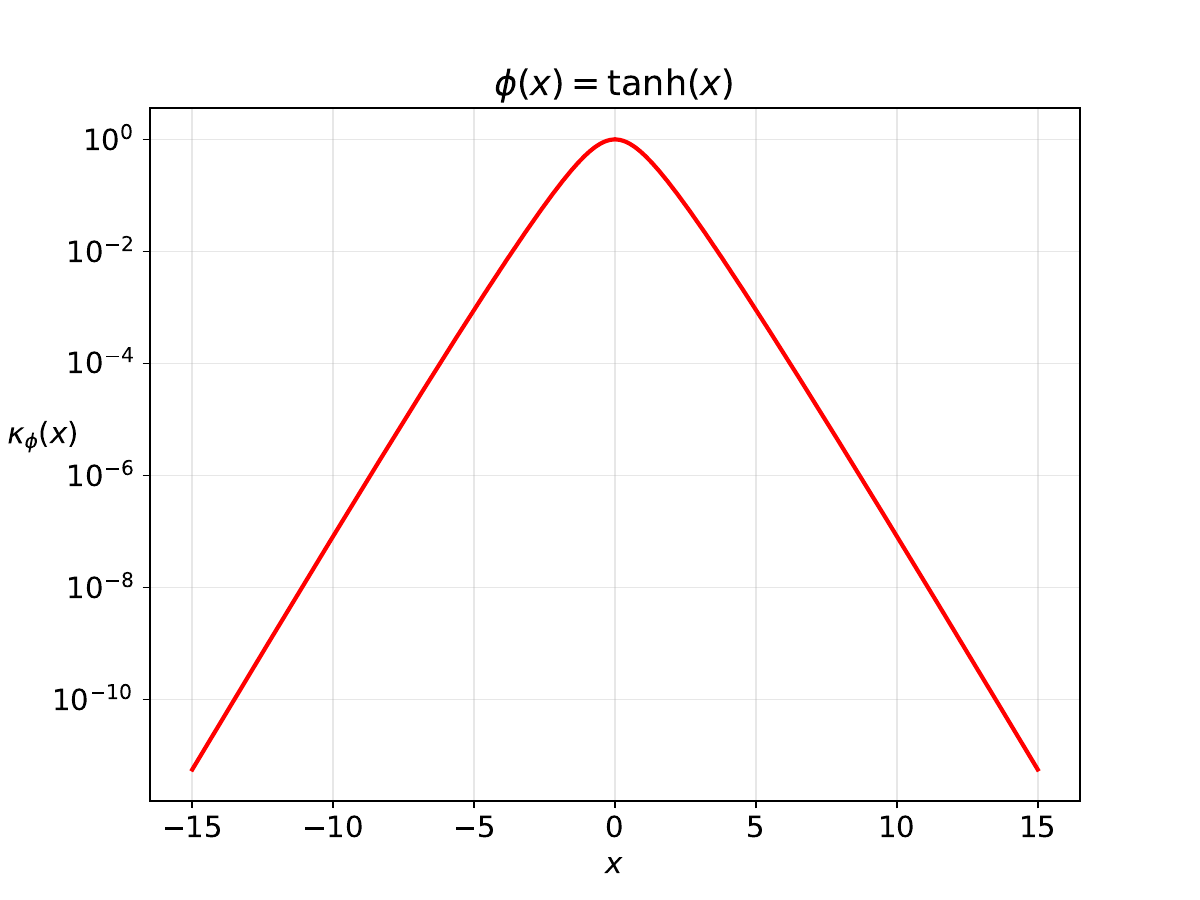}
	\caption{Condition number $\kappa_\phi(x)=\frac{|\phi'(x)x|}{|\phi(x)|}$ for $\phi(x)=\text{ReLU}(x)$ (left) and $\phi(x)=\tanh(x)$ (right). }
	\label{fig:cond}
\end{figure}

\section{A mixed precision algorithm for NN inference }
\label{sec:algo}

In this section, we show how to exploit the analysis presented in the previous
section to introduce mixed precision in the feedforward pass of neural
networks. We assume to have a trained network with given floating-point weights
$W_\ell$, $\ell=1,\dots,L$ stored in precision $\ul$, and we seek to exploit mixed precision in the
computation of the output of the network for a given input $x$.

\subsection{Main principle}

As discussed in the previous section, the errors at layer $\ell$ are
proportional to the product $\kappa_\ell \circ \eps_\ell^W$ (see \eqref{eq.precchoice2})
and our objective is to balance each component so as to minimize the maximum 
$\norminf{\kappa_\ell \circ \eps_\ell^W}$.
Ideally, if the condition numbers $\kappa_\ell$ were readily available, the precisions
of each inner product would simply be chosen such that $(\eps_\ell^W)_i \le \eps/(\kappa_\ell)_i$,
for a given target accuracy $\eps>0$; this choice would indeed yield 
$\norminf{\kappa_\ell\circ\eps_\ell^W} \le \eps$.
This shows that the precision used to compute 
each component of the $\ell$th layer should be chosen to be inversely proportional to the corresponding component
of the condition number $\kappa_\ell$. Let us consider the use of two precisions,
with unit roundoffs $\uh < \ul$. Then for large components of $\kappa_\ell$ we
should be careful in using the high precision $\uh$, whereas for small components,
the errors incurred will be damped and so we can safely use
the lower precision $\ul$ without impacting the accuracy of
the output. Concretely, we can introduce a tolerance $\tau>0$ which controls the precision switch criterion:
if $(\kappa_\ell)_i \le \tau$ we use precision $\ul$, otherwise we use precision $\uh$.
In particular, if the inner product between the $i$th row of $W_\ell$ and $h_{\ell-1}$
is implemented in floating-point arithmetic with a unit roundoff $u_i$ (equal to either $\ul$ or $\uh$),
rounding error analysis~\cite{jeru13} shows that $(\eps^W_\ell)_i = n_{\ell-1}u_i$.
Thus, in order for  $\norminf{\kappa_\ell\circ\eps_\ell^W} \le \eps$ to hold,
we should set the tolerance as $\tau = \eps/(n_{\ell-1} \ul)$.

\subsection{From a theoretical to a practical criterion: estimating $\kappa_\ell$}
\label{sec:simplif}

\begin{figure}
	\centering
	\includegraphics[width=0.5\linewidth]{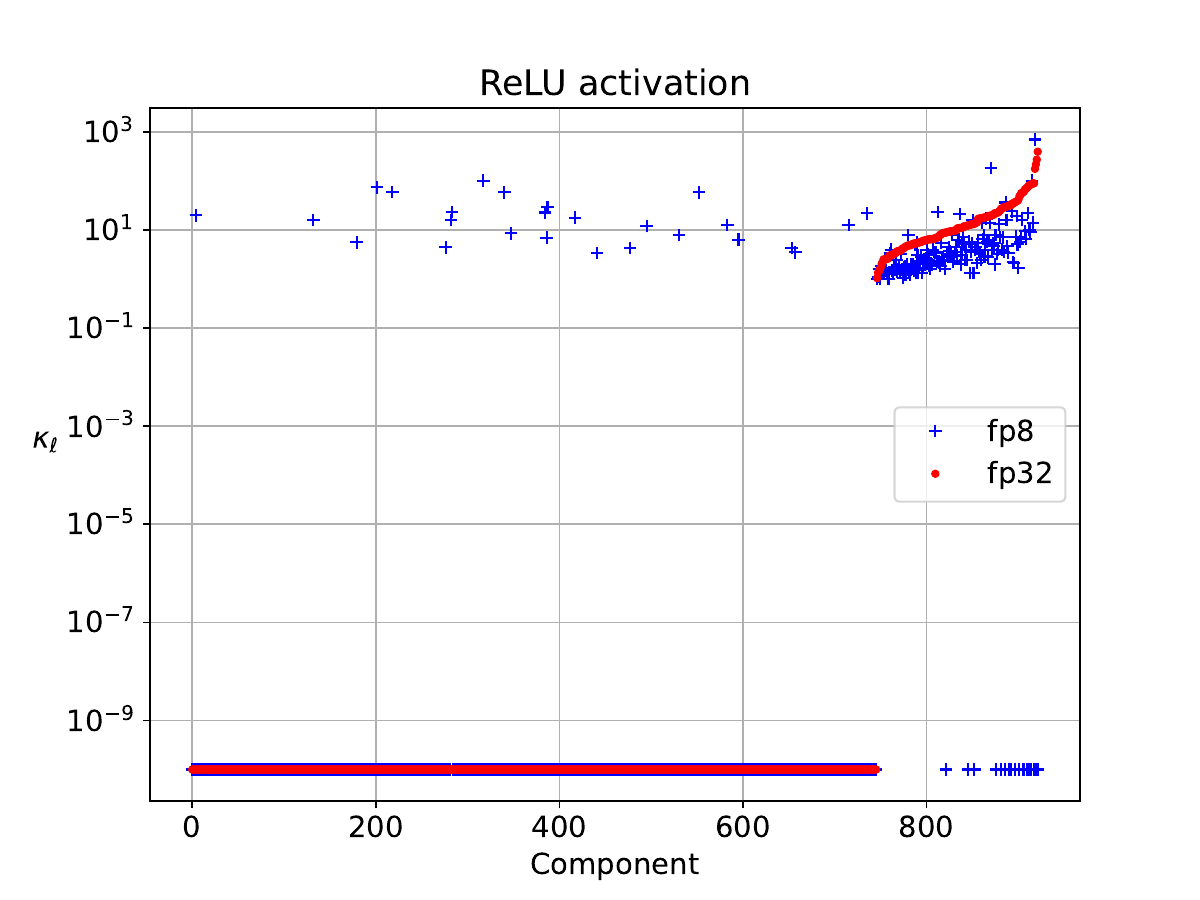}%
	\includegraphics[width=0.5\linewidth]{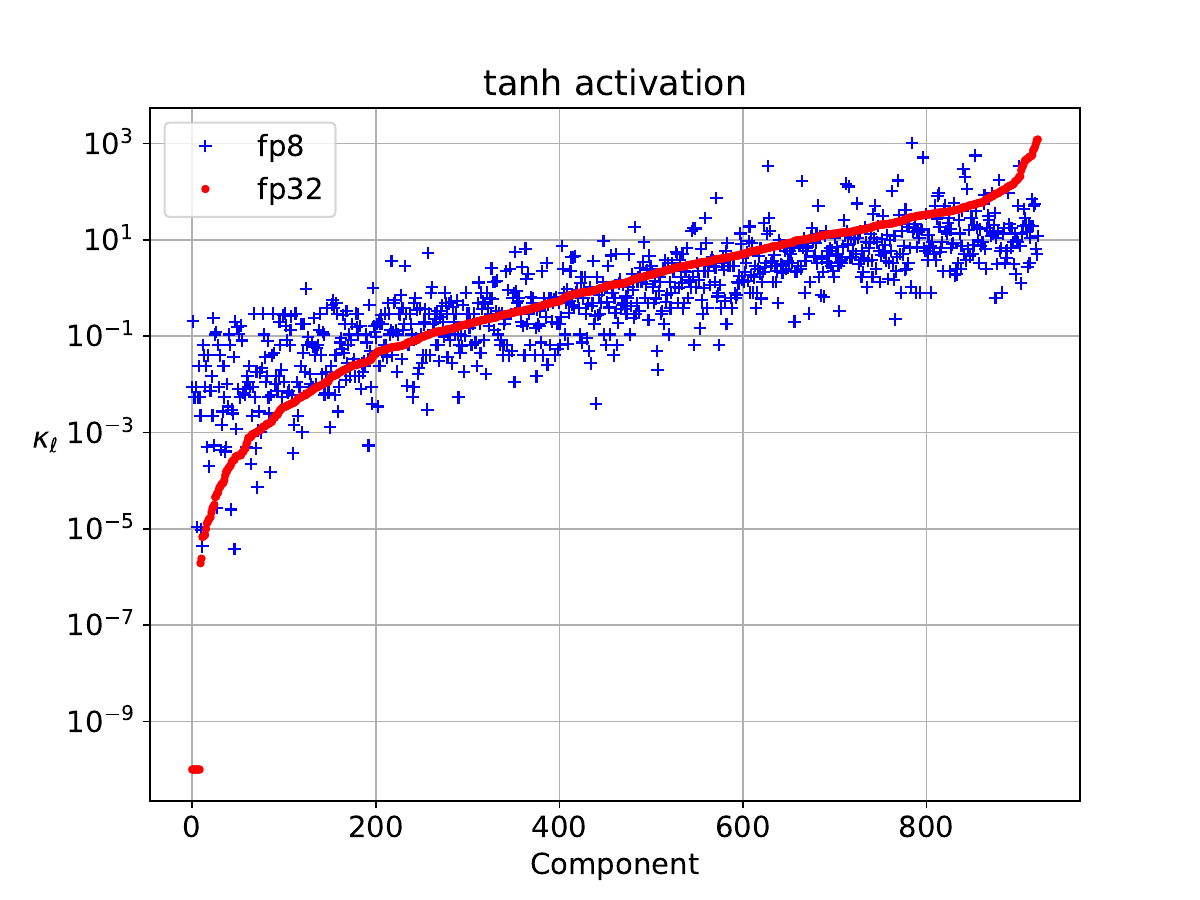}
	\caption{Comparison of the condition numbers $\kappa_\ell=\kappa_\phi\circ \kappa_{v_\ell}$ depending on whether they are computed in fp32 or in fp8,
		for a 3-layer network trained on the MNIST dataset with ReLU (left) and $\tanh$ (right) activations. 
		The values are sorted with respect to the fp32 condition numbers.}
	\label{fig:cond_fp8_fp32}
\end{figure}

\begin{figure}
	\centering
	\includegraphics[width=0.9\linewidth]{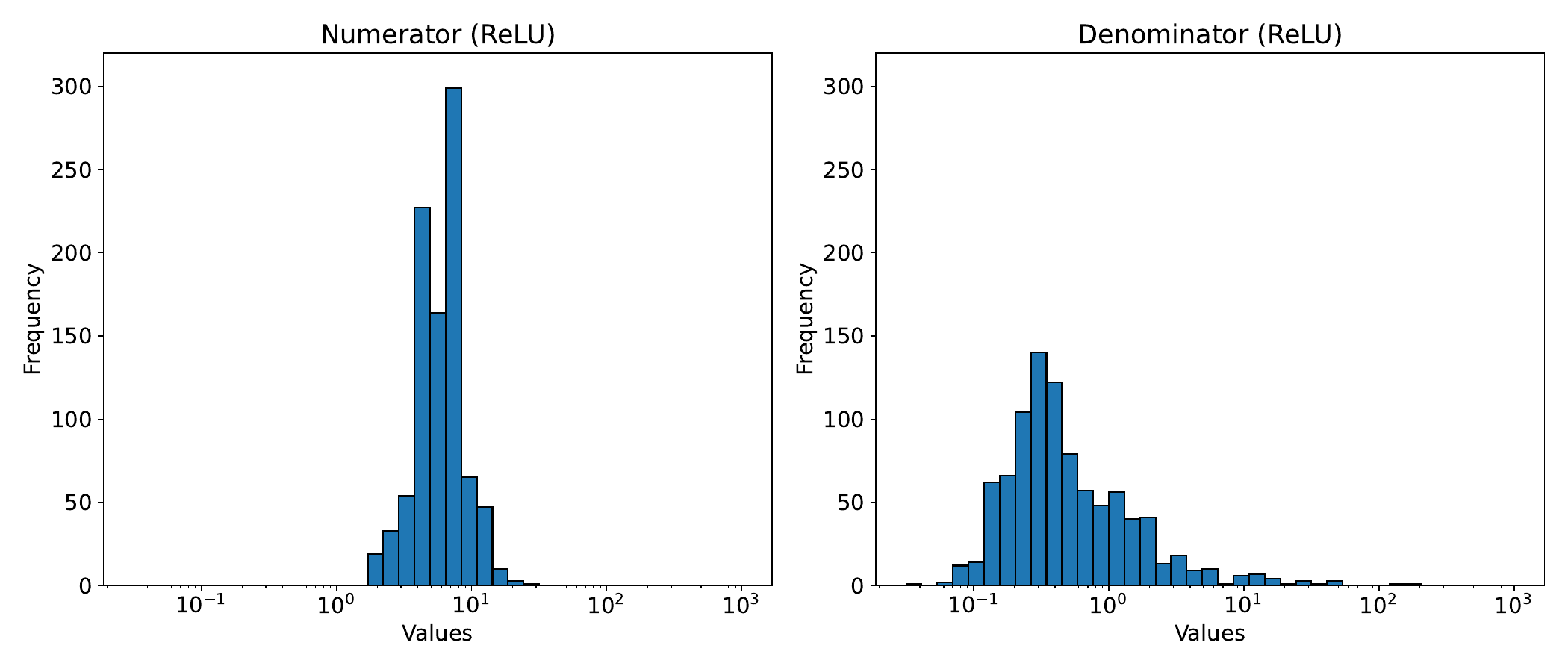}\\%
	\includegraphics[width=0.9\linewidth]{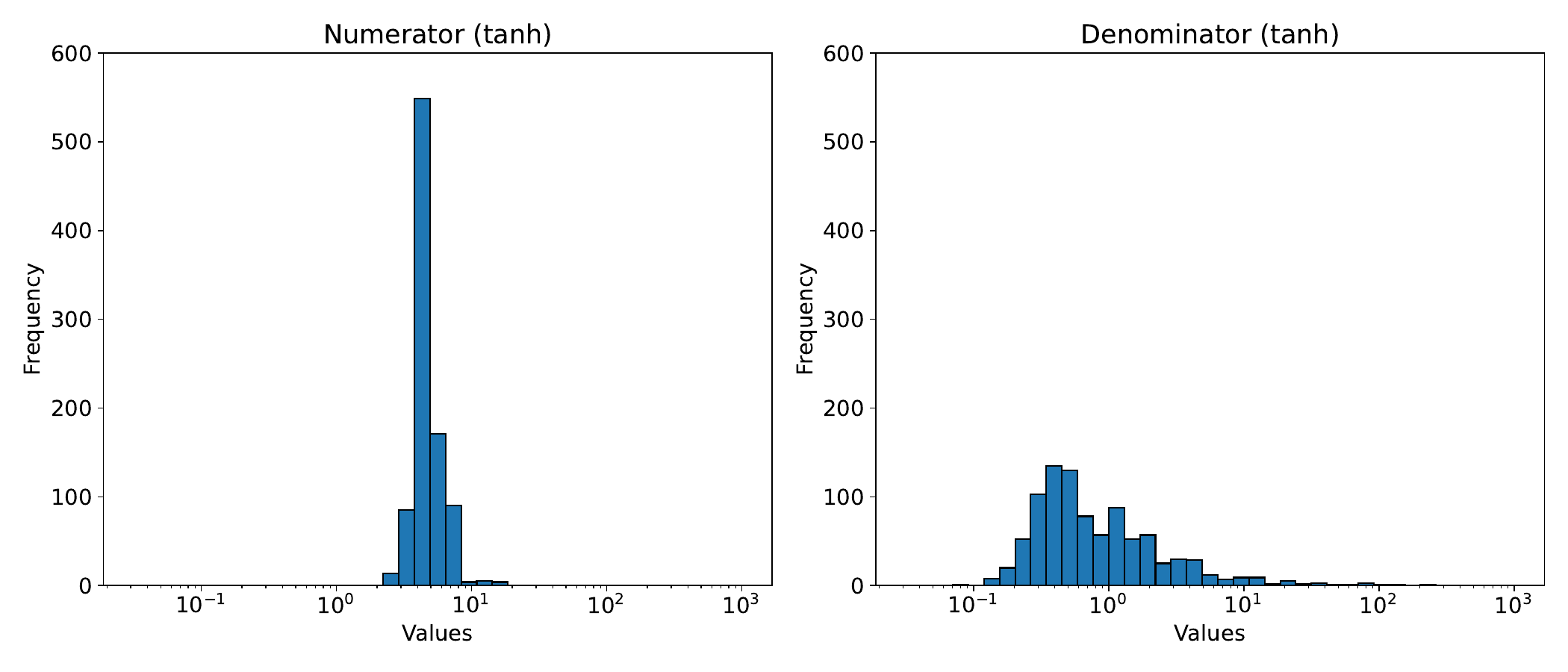}
	\caption{Distribution of the components of the numerator $\abs{W_\ell}\abs{h_{\ell-1}}$ (left) and of the denominator $\abs{W_\ell h_{\ell-1}}$ (right) of $\kappa_{v_\ell}$
		computed in fp8, for a three-layer network trained on the MNIST dataset with ReLU (top) and $\tanh$ (bottom) activations.}
	\label{fig:num_deno}
\end{figure}

While the principle behind this strategy would be mathematically ideal, 
unfortunately, since we do not know
the values of $\kappa_\ell$, it cannot be implemented as it is in practice. Indeed, it is worth recalling
that $\kappa_\ell = \kappa_{\phi_\ell}(v_\ell) \circ \kappa_{v_\ell}$ depends on $v_\ell = W_\ell h_{\ell-1}$;
therefore, computing $\kappa_\ell$ and thus $v_\ell$ in high precision would defeat the purpose of 
using mixed precision, since $v_\ell$ is precisely the result of the matrix--vector product that we aim to accelerate.
Moreover, for any layer $\ell$, $v_\ell$ depends in particular on $h_0 = x$, the input of the network,
so the precision choices depend on the input and cannot be reused across different inputs.

In order to obtain a practical
algorithm, we introduce some approximations. 
The key observation is that we do not need a very accurate computation of $\kappa_\ell$: estimating its order of magnitude 
is sufficient to decide which precision to use. Therefore, this suggests the following idea:
for each layer, compute first $v_\ell$ in precision $\ul$, that is, perform the entire matrix--vector product in low precision.
Then, use this approximate $v_\ell$ to compute an estimated $\kappa_\ell$ and check
the criterion for each component $(\kappa_\ell)_i$:
if $(\kappa_\ell)_i \le \tau$, the component $(v_\ell)_i$ computed in low precision can be kept,
whereas if $(\kappa_\ell)_i > \tau$, $(v_\ell)_i$ should be recomputed in high precision $\uh$.
This approach will therefore work best in situations where most components can be computed in low precision,
and high precision is only needed to recompute a few of the most sensitive components. Indeed, if the criterion leads to 
too many components needing to be recomputed, this mixed precision approach may end up being more expensive than simply computing
everything in high precision from the start.

To assess whether computing $\kappa_\ell$ in low precision 
is a reasonable approximation in practice,
we compare  in Figure \ref{fig:cond_fp8_fp32} the
values of the condition numbers computed in fp32 (red) with the corresponding values
computed in fp8 (blue). We use a three-layer perceptron network trained on the MNIST dataset for the ReLU
(left plot) and $\tanh$ (right plot) activation functions. The figure shows that the values
computed in fp8 follow the same trend as those computed in fp32, thus providing a reasonable
estimate of its order of magnitude. In particular, for the ReLU function, the vast majority of the zero values
(corresponding to negative components of $v_\ell$) in fp32 are correctly identified as zeros in fp8 also. 
There are a few outliers, in both directions: some fp32 zeros become nonzeros in fp8 (top left blue outliers), and may
be needlessly recomputed; conversely, some fp32 nonzeros become zeros in fp8 (bottom right blue outliers), and will be kept
in low precision even though they should be recomputed. These outliers represent a very small percentage of the components
and we may expect them not to have a significant impact on the inference accuracy.
Therefore, in the sequel, we use the low precision $\ul$ to compute the condition numbers $\kappa_\ell$.

Having computed a low precision $v_\ell$, estimating $\kappa_\phi(v_\ell)$ is straightforward: it suffices to compute
$\kappa_\phi(v_\ell) = \abs{v_\ell \circ \phi_\ell'(v_\ell) \oslash \phi_\ell(v_\ell)}$ in precision $\ul$.
Note that this formula involves computing $h_\ell = \phi_\ell(v_\ell)$ in precision $\ul$; the output $h_\ell$ of the $\ell$th layer 
in low precision is thus computed for free as part of this estimation; only the components of $h_\ell$ needing a higher precision will need
to be recomputed. Unfortunately, estimating 
$\kappa_{v_\ell}= (\abs{W_\ell}\abs{h_{\ell-1}})\oslash \abs{W_\ell h_{\ell-1}}$
is still too expensive, because of the expensive computation required by the numerator.
Indeed, computing this numerator for all
$\ell$ would cost the same as a full forward pass, since we need to
compute the matrix--vector products $\abs{W_\ell}\abs{h_{\ell-1}}$ for all
layers. We can however avoid this computation, thanks to the key observation that the
variations in magnitude of $\kappa_{v_\ell}$ are mostly due to variations of
the denominator. In particular, large condition numbers are typically due to
cancellation happening in the denominator, but not in the numerator. A common example is if the matrix is (approximately) Gaussian,
in which case the numerator is tightly concentrated around $n_{\ell-1}/2$ with high probability, whereas the denominator can vary between zero and a multiple of
$\sqrt{n_{\ell-1}}$, and can be arbitrarily close to zero with significant probability~\cite{hima20}.
We illustrate this in Figure~\ref{fig:num_deno}, which reports the distribution of the numerator and the
denominator in $\kappa_{v_\ell}$ for a three-layer network trained on MNIST. 
For both the ReLU (top) and the $\tanh$ (bottom) functions, the denominator (right) has
a much larger dynamic range than the numerator (left). As a consequence, it
seems reasonable to approximate the numerator by a  fixed constant $c$.

Figure~\ref{fig:kappa_new} confirms that the approximation 
$\kappa_{v_\ell} \approx \kappa_{v_\ell}' := c \mathbf{1} \oslash \abs{v_{\ell}}$ 
is reasonable. 
Note that there is no need to tune this constant $c$,
because it can be directly integrated in the criterion based on $\tau$: 
checking whether 
$c \kappa_{\phi_\ell}(v_\ell) \oslash \abs{W_\ell h_{\ell-1}} \le \tau$
is equivalent to checking whether 
$\kappa_{\phi_\ell}(v_\ell) \oslash \abs{W_\ell h_{\ell-1}} \le \tau'$
with $\tau' = \tau/c$.
Thus the tolerance $\tau$ is the only hyperparameter that needs tuning.

\begin{figure}[tb]
\centering
\includegraphics[width=0.5\linewidth]{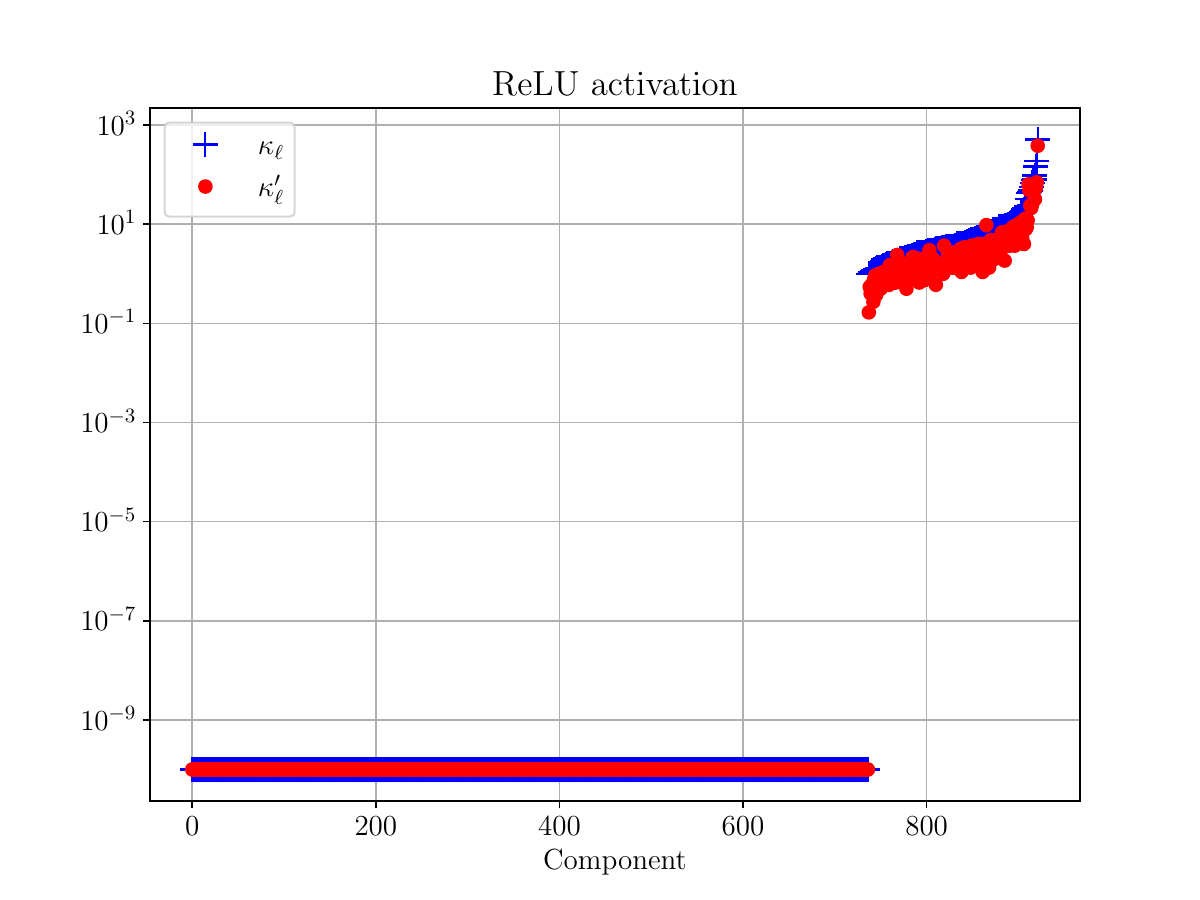}%
\includegraphics[width=0.5\linewidth]{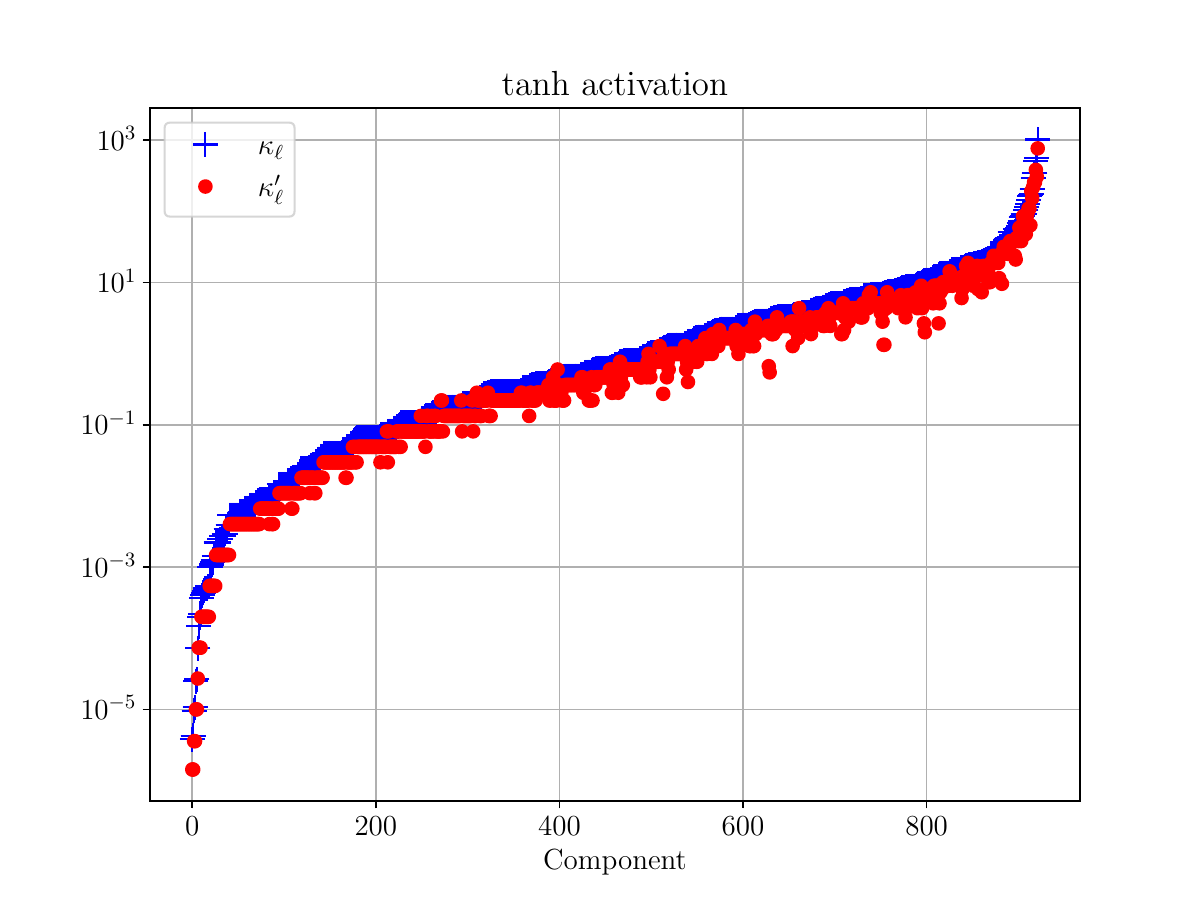}
\caption{Comparison of the condition number $\kappa_\ell=\kappa_\phi(v_\ell)\circ \kappa_{v_\ell}$ and its proposed
	approximation $\kappa_\ell'=\kappa_\phi\circ \frac{c}{\abs{W_\ell h_{\ell-1}}}$ (with $c=3$), 
	both computed in fp8, for a three-layer network trained on the MNIST dataset with ReLU (left) and $\tanh$ (right) activations.}

\label{fig:kappa_new}
\end{figure}

\subsection{The algorithm}

The successive approximations introduced above lead to a practical criterion for a mixed precision inference evaluation strategy. We summarize 
the proposed approach in Algorithm \ref{alg:ALG1}.

\begin{algorithm}[htbp]
	\caption{Neural network inference with mixed precision accumulation}
	\label{alg:ALG1}
	\begin{algorithmic}[1]
		\Input $W_1,\dots,W_L$, the weight matrices; $h_{0}=x$, the input vector; $\tau$, a tolerance controlling the precision choice; $\ul,\uh$, the precisions. 
		\Output $h_L$, the output of the network.
		\For{$\ell=1,\dots,L$}
		\State \label{step1} Compute $v_\ell= W_\ell  h_{\ell-1}$ in precision $\ul$.
		\State Compute $h_\ell = \phi_\ell(v_\ell)$ in precision $\ul$.
		\State Compute $\kappa_{\phi_\ell}(v_\ell) = \abs{v_\ell \circ \phi_\ell'(v_\ell)} \oslash \abs{\phi_\ell(v_\ell)}$ in precision $\ul$.
		\State \label{step2} Compute $\kappa_\ell = \kappa_{\phi_\ell} \oslash \abs{v_\ell}$ in precision $\ul$.
		
		\For{every component $(\kappa_\ell)_i$ }
		\If{$(\kappa_\ell)_i > \tau $}
		\State \label{step3} Recompute $(v_\ell)_i= (W_\ell  h_{\ell-1})_i$ in precision $\uh$.
		\State \label{step4} Recompute $(h_\ell)_i = \phi_\ell((v_\ell)_i)$ in precision $\uh$.
		\State \label{step-requant} Requantize $(h_\ell)_i$ back to precision $\ul$. 
		\EndIf
		\EndFor
		\EndFor
	\end{algorithmic}
\end{algorithm}

As mentioned previously, in order for the algorithm to be efficient,
the percentage of components that need to be recomputed in high precision must be small.
We now quantify this statement more precisely by using the following cost model.
We only consider the cost of the matrix--vector products $W_\ell h_{\ell-1}$. These require 
$O(n_\ell n_{\ell-1})$ floating-point operations, whereas the remaining steps of the algorithm (which essentially consist of the evaluation of the
activation functions and the estimation of the condition numbers) only require $O(n_\ell)$ operations/function evaluations.
Therefore for large-scale networks we may reasonably assume that the cost of the matrix--vector products will dominate---note that this specifically assumes
multilayer perceptron networks; see section~\ref{sec:concl} for a discussion on the extension to convolutional networks.

Let us thus focus on the matrix--vector products. 
Let $\cl$ be the cost of performing all the matrix--vector products (across all layers) in uniform precision $\ul$, and
let $\ch$ be the corresponding cost when using uniform precision $\uh$ instead.
Let $\rho \in [0,1]$ be the
fraction of components---and thus of inner products---that need to be
recomputed in precision $\uh$. Then the cost of the mixed precision Algorithm~\ref{alg:ALG1}
is 
\begin{equation}
\label{eq:mp_cost}
\cm = \cl + \rho \ch =  \left(\frac{\cl}{\ch}+\rho\right)  \ch.
\end{equation}
It is thus important to note that while we naturally have $\cl \le \cm$,
we cannot guarantee in general that $\cm \le \ch$: for this to hold,
we must have the condition $\cl/\ch + \rho < 1$.
In other words, 
the mixed precision cost will be less than the high precision one if
the costs ratio between the low and high precision is sufficiently small, and
the fraction of components that need to be recomputed in high precision is also sufficiently small.

\begin{remark}
Algorithm \ref{alg:ALG1} can
easily be extended to use more than two precisions. Indeed, given a list of precisions with unit roundoffs 
$u_1 > \ldots > u_p$,
we can first compute $h_\ell$ in precision $u_1$ and check the components of $\kappa_\ell$
against a list of tolerances $\tau_1 < \ldots < \tau_{p-1} < \tau_p := \infty$.
Components $(\kappa_\ell)_i \in (\tau_{j},\tau_{j+1}]$ are then recomputed in precision $u_j$,
for $j=1\colon p-1$.
The cost model \eqref{eq:mp_cost} then becomes
\[
\cm = c_1 + \sum_{j=2}^{p} \rho_j c_j,
\]
where $c_j$ is the cost of computing an MMA in precision $u_j$ and $\rho_j$ is the fraction of
components that are recomputed in precision $u_j$.
\end{remark}

\begin{remark}
Line~\ref{step-requant} of Algorithm \ref{alg:ALG1} assumes the weights are quantized in precision $\ul$. However, in principle, the quantization
precision can be chosen independently of the two accumulation precisions; the quantized weights can even be in mixed precision. 
Note that Algorithm~\ref{alg:ALG1} does not require a copy
of the weights in different precisions; only the quantized weights are needed.
\end{remark}

\section{Numerical experiments}\label{sec:num}

In this section we experimentally assess the potential of the mixed precision strategy introduced
in Algorithm \ref{alg:ALG1}.

\paragraph{\textbf{Experimental setting and description of the figures.}}

We consider multilayer perceptron networks~\cite{mlp} with 3, 5, or 8 layers (including both the hidden and input/output layers),
with either ReLU or $\tanh$
activation functions. The weight matrices for an $L$-layer network have
dimensions $784\times 784$ for the first $L-2$ layers, $128\times 784$ for
layer $L-1$, and $10\times 128$ for layer $L$.

Our experiments use floating-point arithmetic, with two different formats:
the fp8-E4M3 format~\cite{ocp23a},
an 8-bit format with 4 bits dedicated to the
exponent and 3 bits to the mantissa, 
and the IEEE 754 fp16 format~\cite{ieee19}, a 16-bit format with 5
bits dedicated to the exponent and 10 bits to the mantissa. 
Hereinafter, we denote these two formats simply as fp8 and fp16, respectively.
We leverage the \texttt{mptorch}~\cite{mptorch} Python library to faithfully simulate reduced precision computations in fp8. 

For all experiments, the neural networks considered are pre-trained on the MNIST and Fashion MNIST datasets using IEEE 754 fp32 (single precision) arithmetic and a quantization-aware training approach~\cite[sect.~4]{nagel2021white} where the weights are quantized to the target fp8 format. 

We consider and compare three accumulation strategies in performing feed-forward computation on the chosen networks:
two uniform precision variants, which use the same accumulation precision (either fp8 or fp16) across all components, and our mixed precision variant (Algorithm~\ref{alg:ALG1}), which uses
fp8 as the low precision $\ul$ and fp16
as the high precision $\uh$.

On most hardware, we can expect fp8 arithmetic to be twice as fast as fp16 arithmetic. Thus, in our cost model, we assume $\cl/\ch = 0.5$. Then~\eqref{eq:mp_cost}
yields
\begin{equation}
\label{eq:mp_cost-specialized}
\cm = (0.5+\rho) \ch,
\end{equation}
where $\rho \in [0,1]$ is the fraction of inner products that must be recomputed in fp16. 
Based on~\eqref{eq:mp_cost-specialized} we can expect that if $\rho < 0.5$, the cost
of the mixed precision fp8/fp16 method will be lower than that of the uniform fp16 one. For each network type and each precision configuration variant,
we perform inference on 10,000 different test inputs and report the resulting test accuracy (that is, the percentage of inputs correctly classified).

The results are presented in Figure~\ref{fig:relu} for ReLU activation functions and in Figure~\ref{fig:tanh} for $\tanh$.  In each figure, the top, middle, and bottom
plots correspond to networks with 3, 5, and 8 layers, respectively. The left and right plots
correspond to the MNIST and Fashion MNIST datasets, respectively.
Each individual plot shows the test accuracy on the $x$-axis and the fraction $\rho$ of inner products (re)computed in fp16 on the $y$-axis,
for each of the three precision configurations: uniform fp8 (a single triangle marker, always found at $y=0$), uniform fp16 (a single star marker, always found at $y=1$),
and the mixed precision Algorithm~\ref{alg:ALG1} with various values for the tolerance $\tau$ (blue line).  

The figures show that different precision configurations achieve different cost--accuracy tradeoffs.
Without surprise, the uniform fp16 variant is always more accurate than the fp8 one. 
As for the mixed precision variant, we see that decreasing the tolerance $\tau$ increases the accuracy
but also increases the fraction of inner products that need to be recomputed in fp16.
Based on the cost model~\eqref{eq:mp_cost-specialized}, we also plot a dashed
line at $\rho=0.5$, the maximum value for which the cost of the mixed precision
algorithm remains less than the uniform fp16 one. Hence blue points below that
dashed line are potentially of interest.

\paragraph{\textbf{Discussion of the results when using ReLU activation functions.}}

The results of these experiments are reported in Figure~\ref{fig:relu}.
For the ReLU function for any choice of the tolerance $\tau$, only a tiny
fraction of the inner products need to be recomputed in fp16. This is due to the fact that
$\kappa_{\phi}(x) = 0$ if $x < 0$, meaning that any inner product whose result is negative will
systematically be kept in low precision regardless of $\tau$.
As it turns out, the percentage of negative inner products, and thus of zero condition numbers,
is extremely large. Table~\ref{tab:kappa-phi} summarizes these percentages (averaged over all inputs)
for the different types of networks;
they are very large regardless of the dataset or of the number of layers,
exceeding 75\% in all cases.
This explains why, in the left plots, the blue points corresponding to the mixed precision configuration
never exceed a fraction of $\rho=0.25$ inner products recomputed in fp16. Thus, we are far below
the $\rho=0.5$ limit and we can expect the mixed precision variant to be significantly faster
than the uniform fp16 one.

\begin{figure}[htbp]
  \centering
  \subfloat[3 layers]{%
    \includegraphics[width=0.48\linewidth]{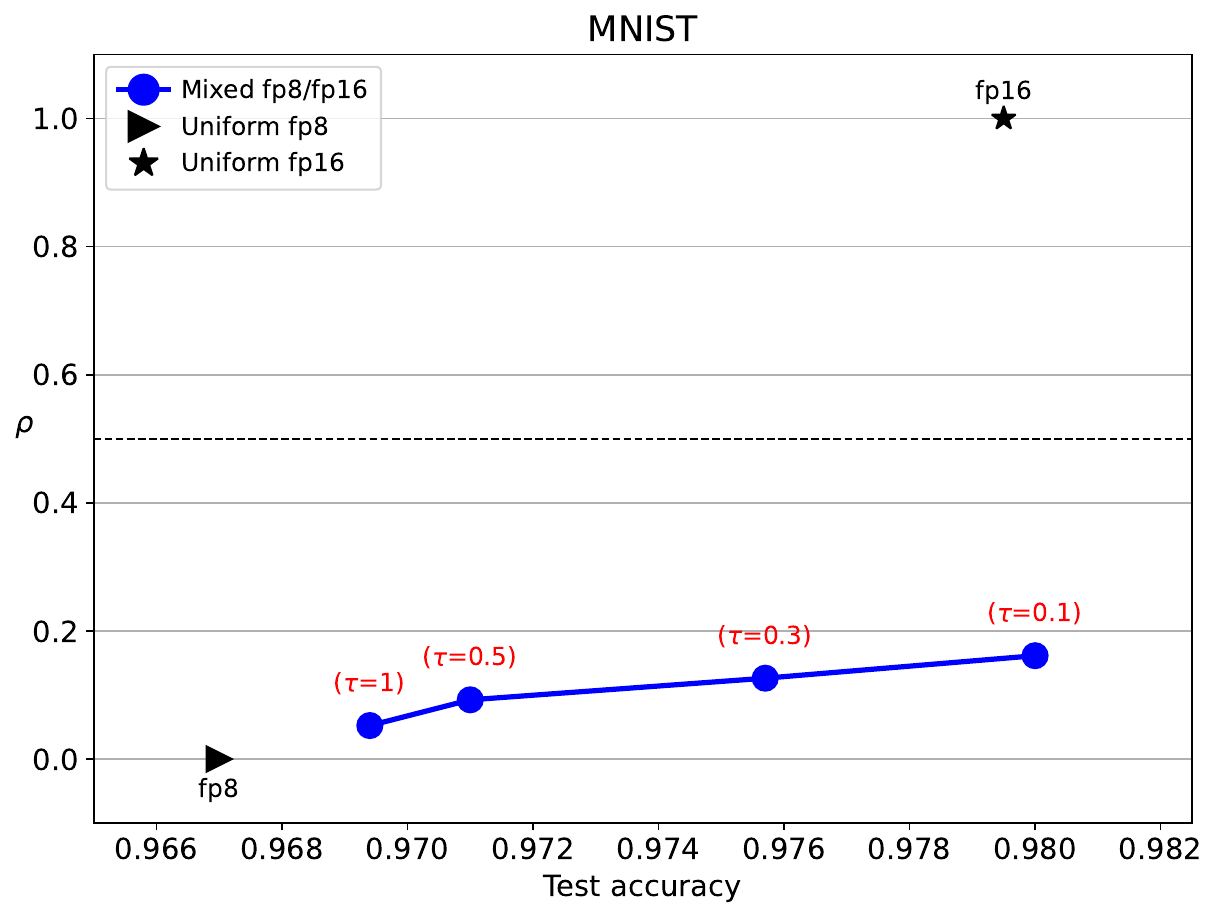}%
    \includegraphics[width=0.48\linewidth]{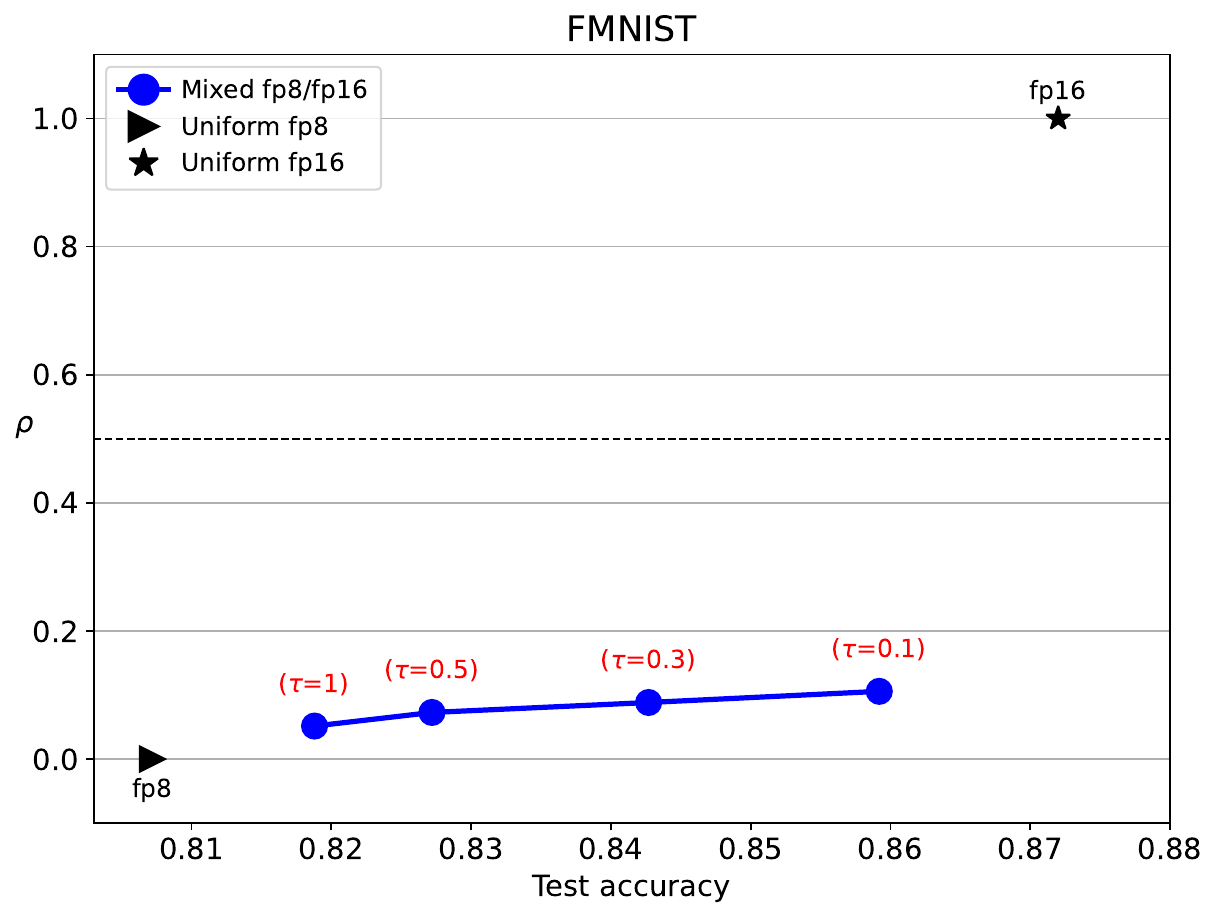}%
  }\hfill
  \subfloat[5 layers]{%
    \includegraphics[width=0.48\linewidth]{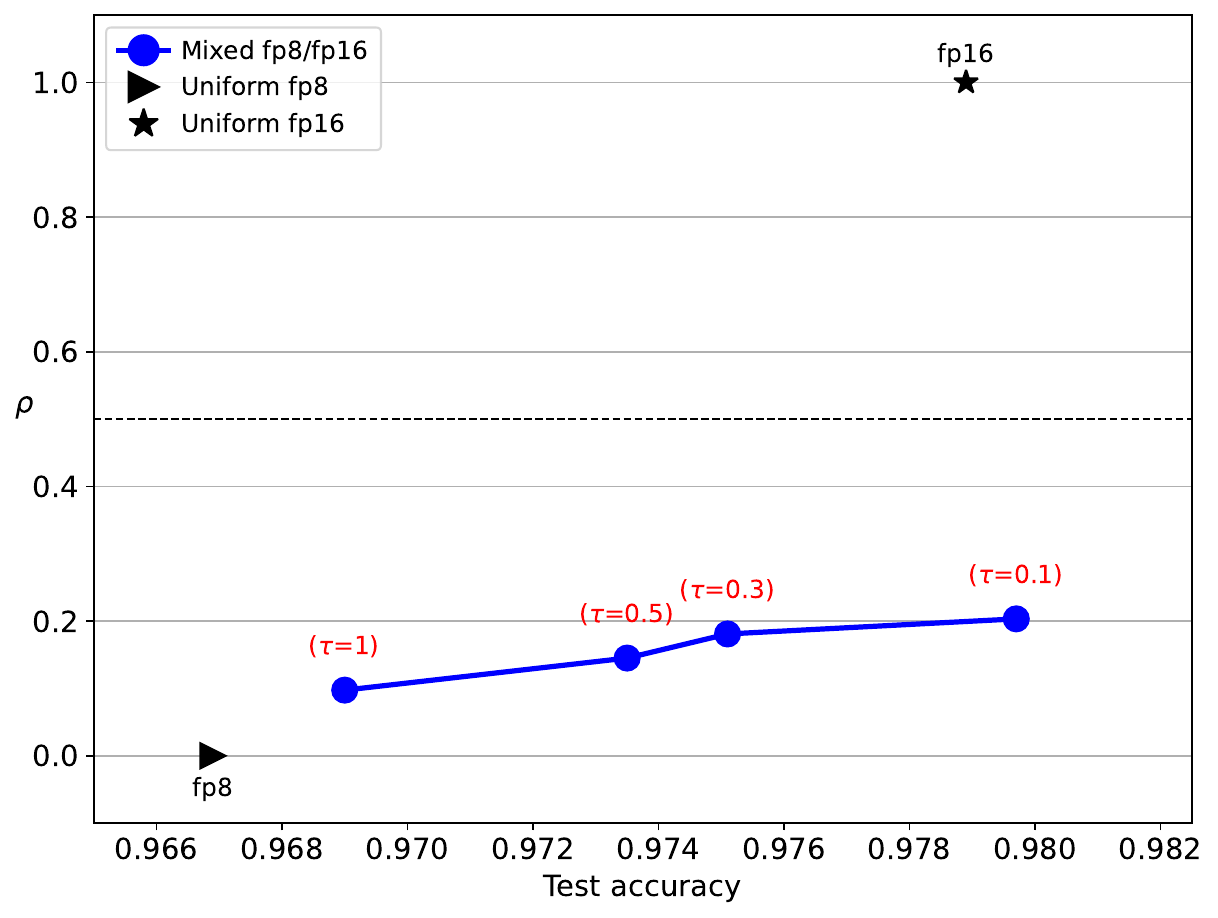}%
    \includegraphics[width=0.48\linewidth]{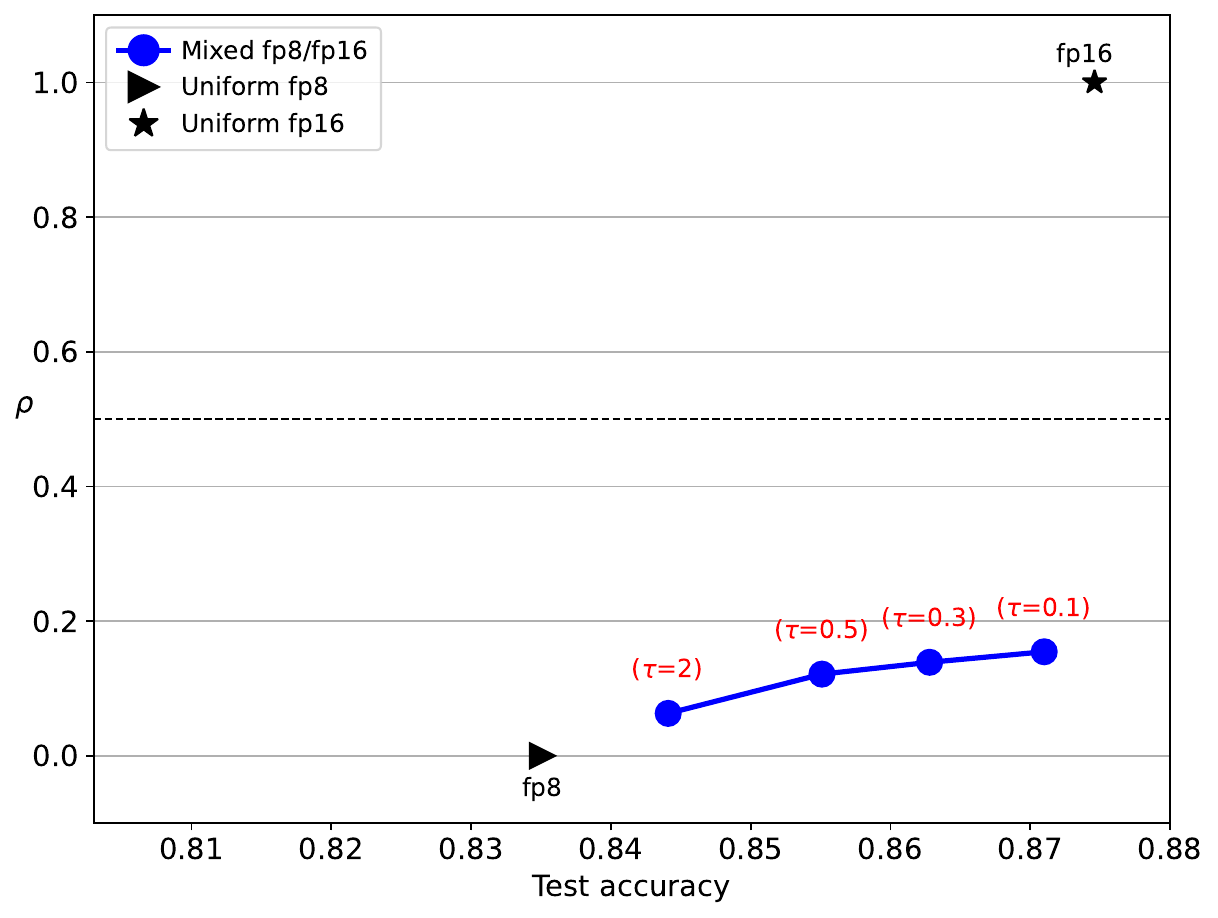}%
  }\hfill
  \subfloat[8 layers]{%
    \includegraphics[width=0.48\linewidth]{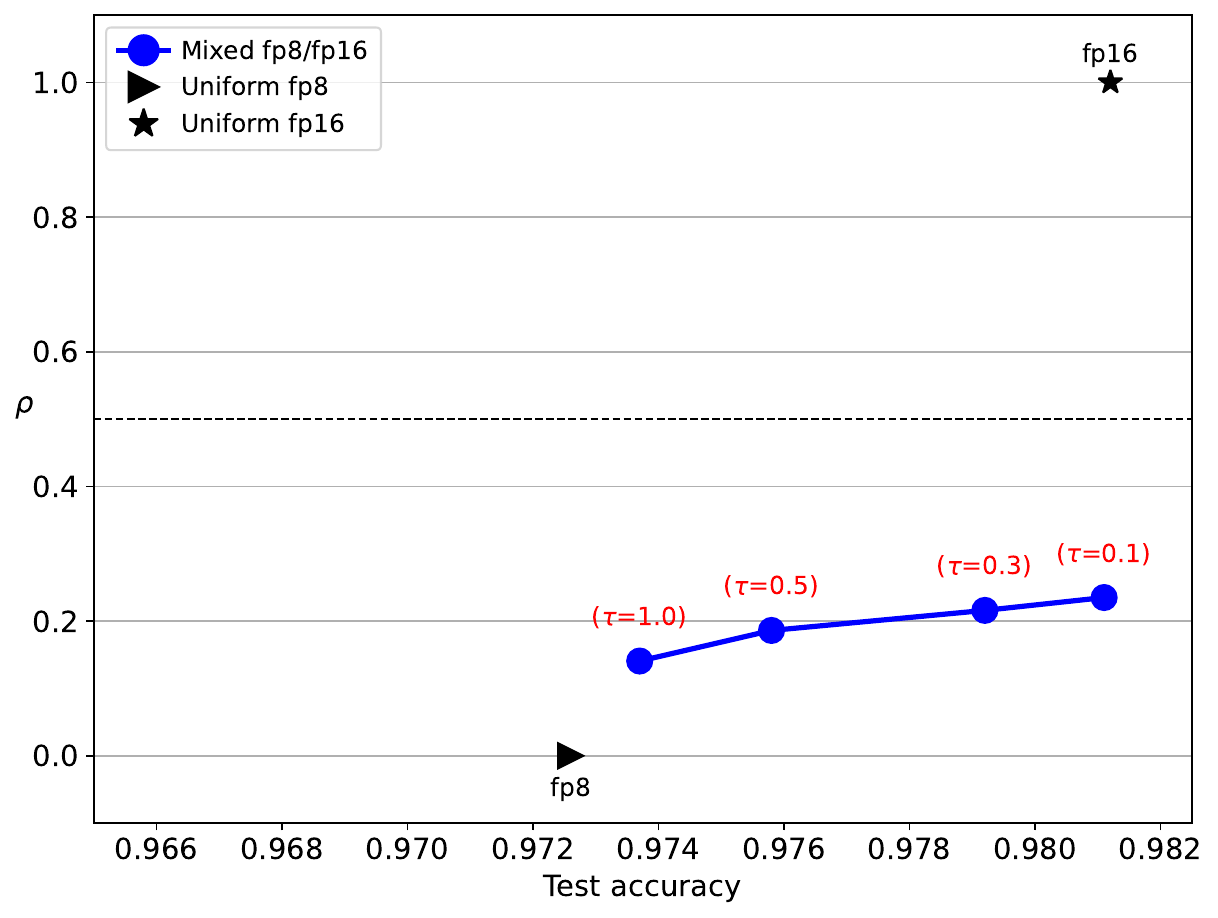}%
    \includegraphics[width=0.48\linewidth]{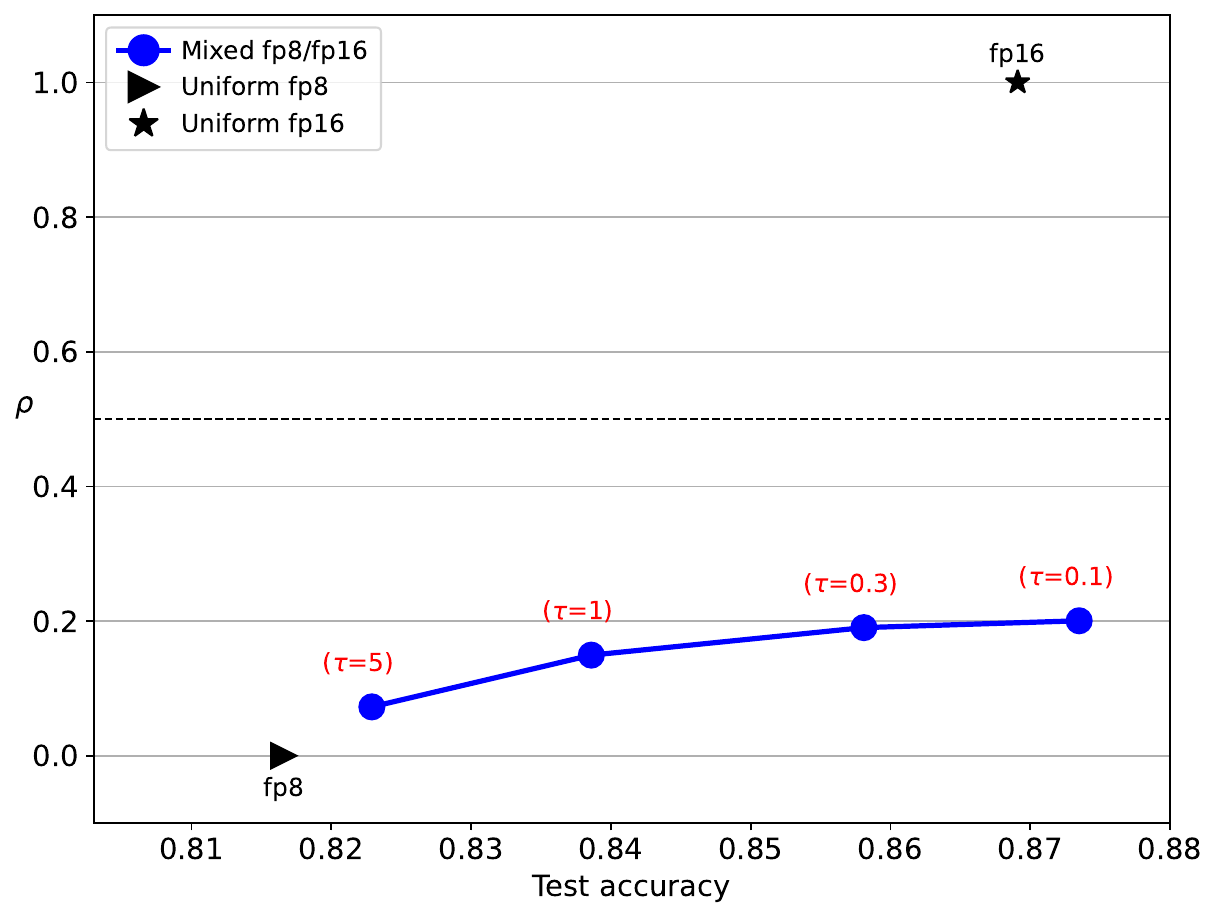}%
  }
\caption{Cost--accuracy tradeoff achieved by different precision configurations on the MNIST (left) and Fashion MNIST (right) datasets, for multilayer perceptron
  networks with 3 (top), 5 (middle), or 8 (bottom) layers, 
	using ReLU  activations. The $x$-axis plots the test accuracy of the inference on the $10,000$ samples of the dataset; the $y$-axis
	plots the fraction $\rho$ of inner products (re)computed in fp16. For the mixed precision configuration (Algorithm~\ref{alg:ALG1}), each point corresponds to
	a different value of the tolerance $\tau$ as indicated.}
  \label{fig:relu}
\end{figure}

\begin{table}
\centering
\caption{Average percentage of zero values in the condition number of ReLU
activations for multilayer perceptron networks with 3, 5, or 8 layers trained
on the MNIST and Fashion MNIST datasets, using fp8 arithmetic.\label{tab:kappa-phi}
}

\begin{tabular}{ccc}
	\toprule Multilayer Perceptron Configuration
	& MNIST &  Fashion MNIST \\
	\midrule
	$3$ layers & $84\%$ & $90\%$  \\
	$5$ layers & $80\%$ & $85\%$  \\
	$8$ layers & $77\%$ & $80\%$  \\
	\bottomrule
\end{tabular}
\end{table}

Despite the large number of operations performed in fp8,  the mixed precision
variant always achieves a better accuracy than the uniform fp8 variant. More importantly,
for a sufficiently small tolerance $\tau$, its accuracy matches that of the uniform fp16 variant.
Thus, the mixed precision variant is \textit{faster yet equally as accurate} as the uniform fp16 variant.
It is interesting to note that as we increase $\tau$,
the fraction $\rho$ of inner products needing to be recomputed in fp16 does slightly decrease, from roughly 0.2 to 0.1.
Since for ReLU $\kappa_{\phi_\ell}$ is either 0 or 1, this behavior is explained by the variations in the components
of $\kappa_{v_\ell}$. Specifically, for very small values of $\tau$, components with $\kappa_{\phi_\ell}=1$ will always be recomputed in fp16.
As we increase $\tau$, some of these components may be kept in fp8 if $\kappa_{v_\ell}$ is small enough, further reducing the fraction
of fp16 computations. However, the figure shows that the test accuracy quickly degrades when doing so, for a cost reduction that is not that
significant. Therefore, these experiments suggest that for ReLU activations, a good rule of thumb is to recompute in fp16 all positive
inner products (for which $\kappa_{\phi_\ell}=1$).

All these observations hold consistently for
all the tested networks, even as we increase the number of layers, both for the MNIST and Fashion MNIST datasets.

\paragraph{\textbf{Discussion of the results when using tanh activation functions.}}

For the $\tanh$ function, the situation is quite different, as shown in Figure~\ref{fig:tanh}.
The fraction $\rho$ of inner products needing to be recomputed in fp16 quickly increases as $\tau$ decreases, so that not all
mixed precision configurations are interesting.
Indeed, in view of \eqref{eq:mp_cost-specialized}, all choices of $\tau$ that demand to recompute more
than $\rho=0.5$ of the inner products
(blue points above the dashed line) should be discarded, since they
are more expensive than the uniform fp16 variant, and yet achieve a lower test accuracy as shown in the figure.
However, some choices of $\tau$ still provide an
interesting compromise between accuracy and cost. The largest values of $\tau$ (for example, $\tau=5$)
often still allow for a slight improvement of the accuracy with respect to uniform fp8, 
which comes almost for free since $\rho\approx 0.9$ in these cases. Alternatively,
more intermediate values of $\tau$ (for example, $\tau=1$) can achieve much more significant accuracy improvements (without, however, reaching the
same accuracy as fp16), for a cost that is in between that of the uniform fp8 and fp16 variants (for example, 
$\rho\approx0.3$, which corresponds to a $20\%$ cost reduction with respect to uniform fp16 in view of \eqref{eq:mp_cost-specialized}).

Overall, these experimental results support the conclusions of our analysis, confirming that it is indeed meaningful to compute different components
of the layers in different precisions, and highlight the potential of the proposed Algorithm~\ref{alg:ALG1} 
to improve the cost--accuracy tradeoff, particularly in the case of ReLU activations. 

\begin{figure}[htbp]
  \centering
  \subfloat[3 layers]{%
    \includegraphics[width=0.48\linewidth]{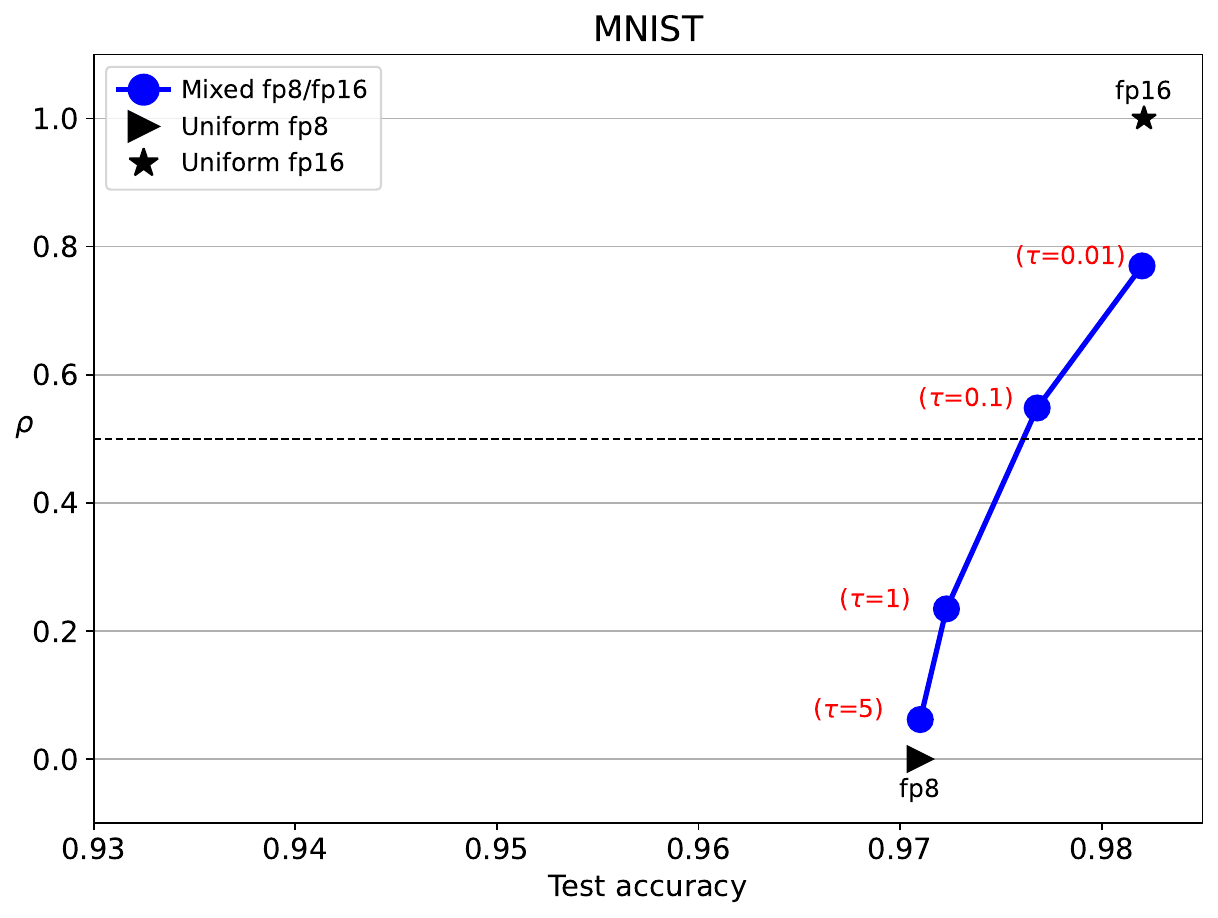}%
    \includegraphics[width=0.48\linewidth]{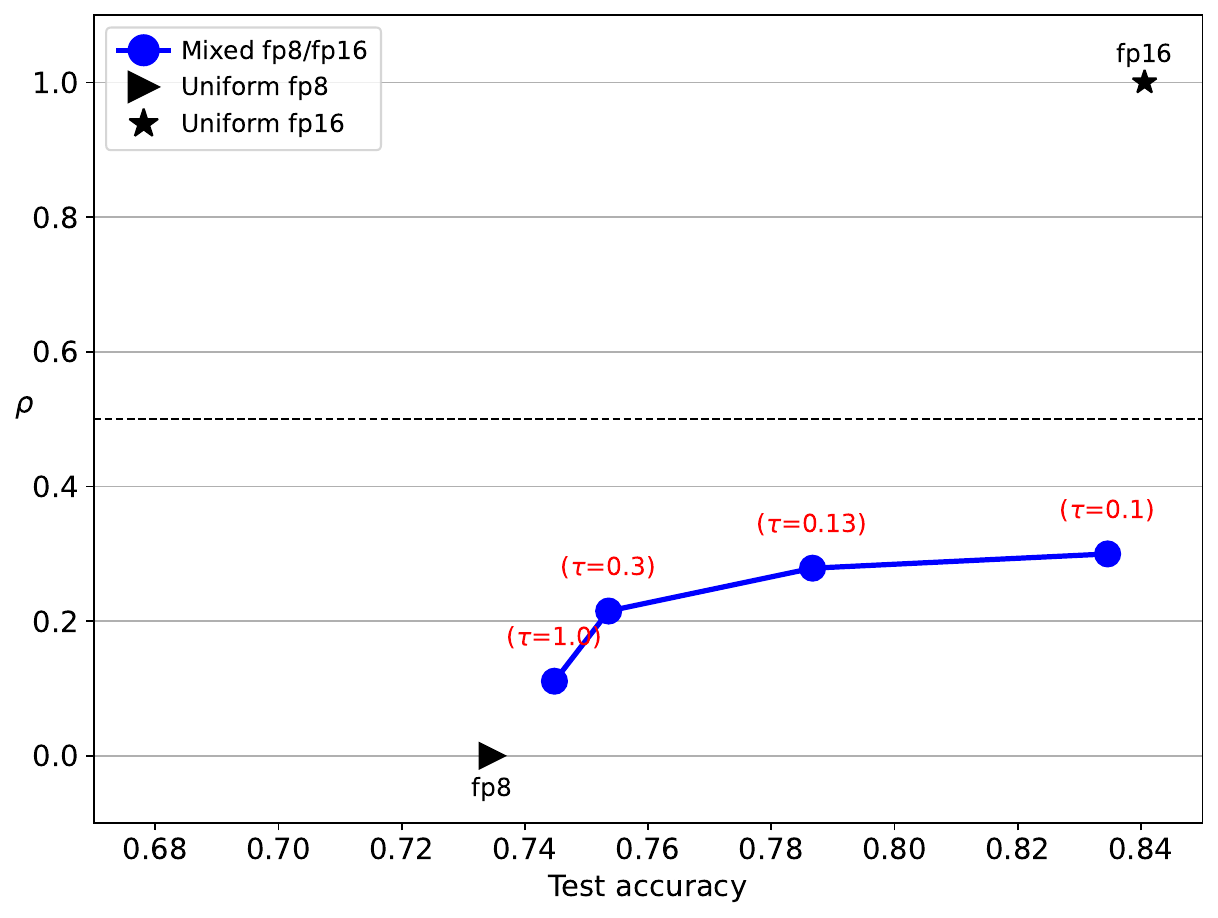}%
  }\hfill
  \subfloat[5 layers]{%
    \includegraphics[width=0.48\linewidth]{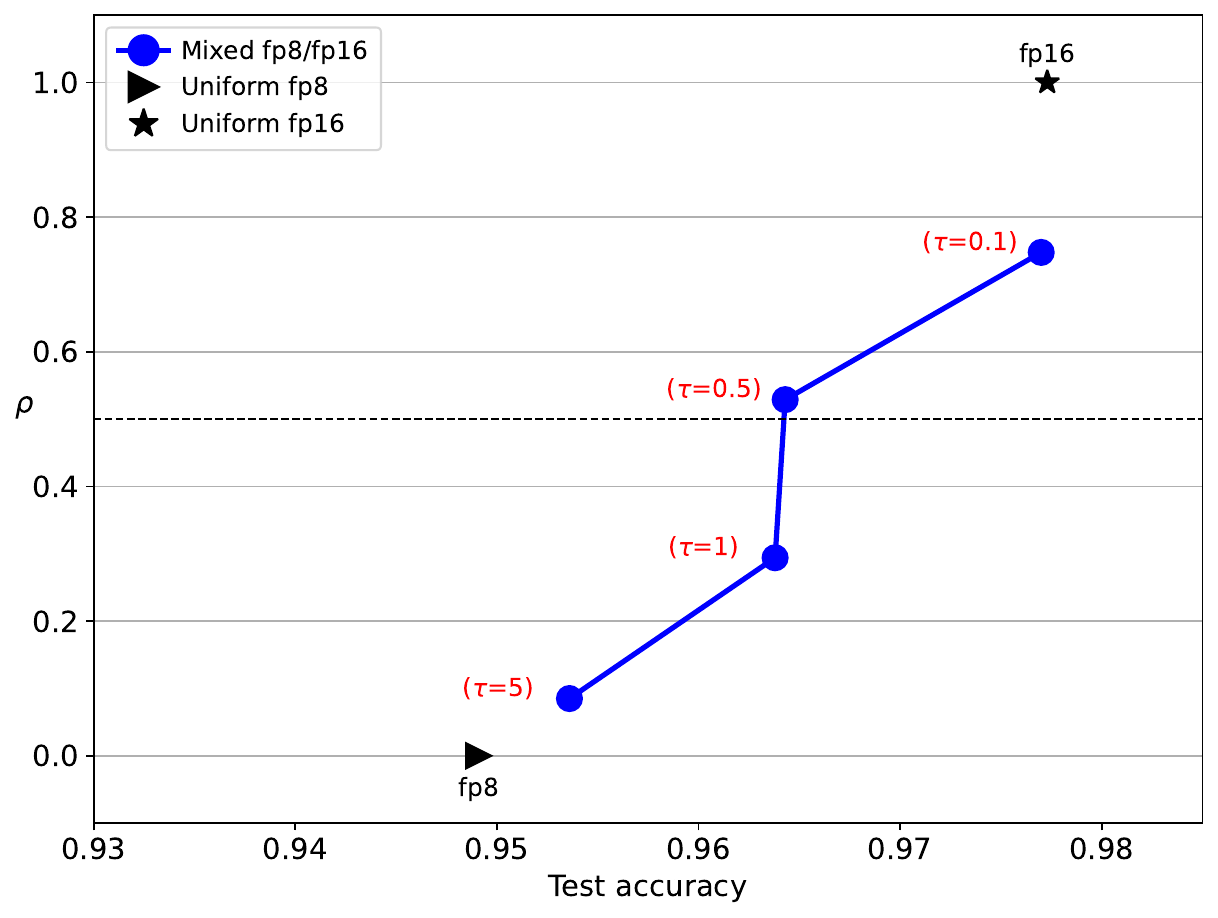}%
    \includegraphics[width=0.48\linewidth]{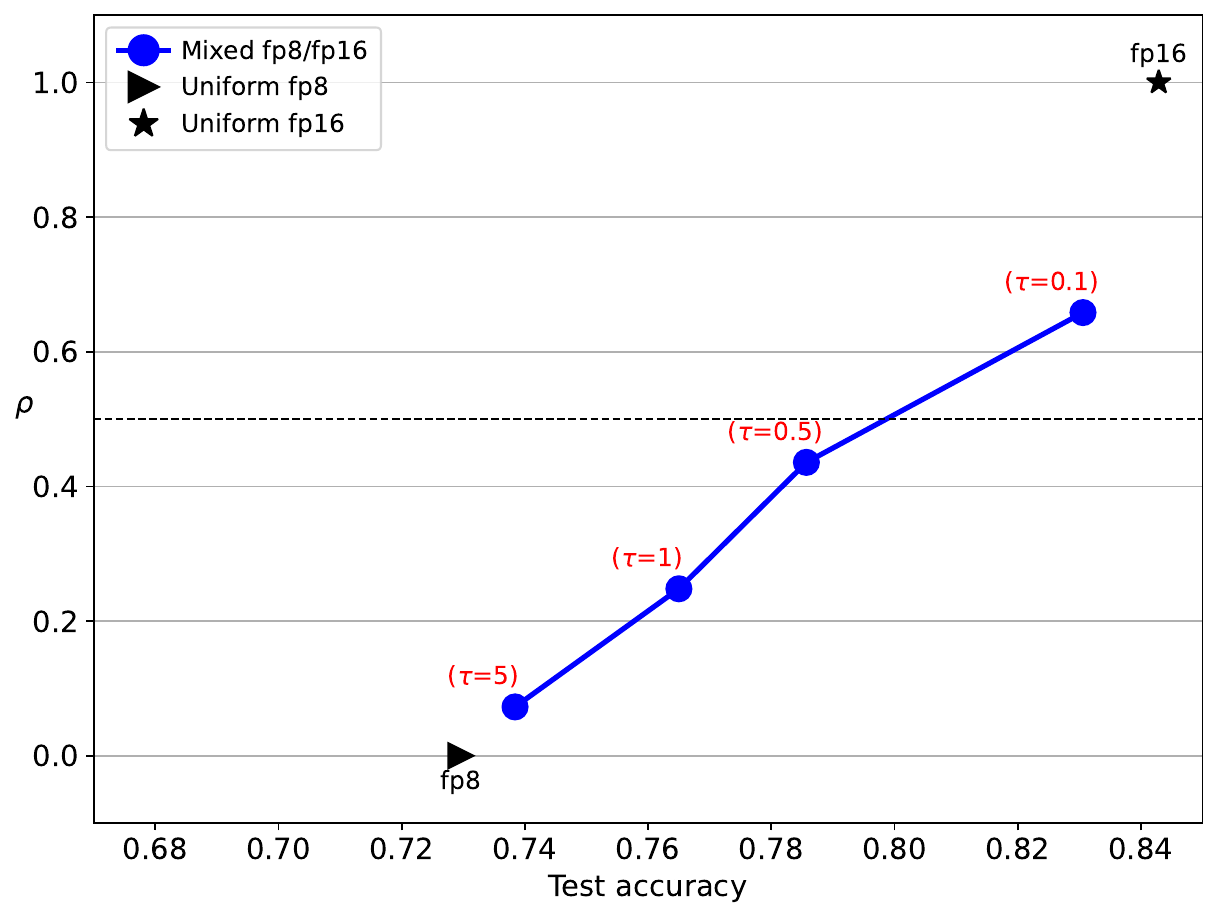}%
  }\hfill
  \subfloat[8 layers]{%
    \includegraphics[width=0.48\linewidth]{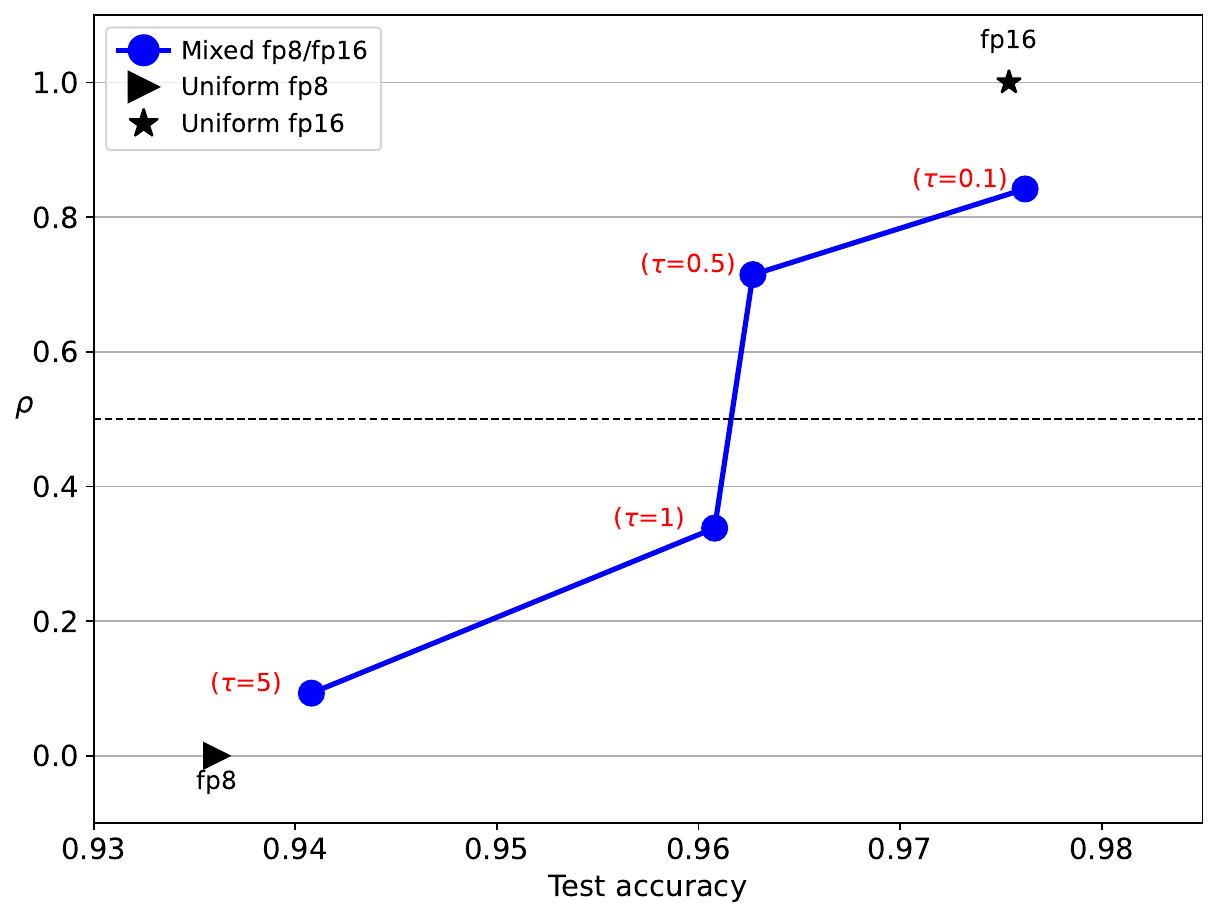}%
    \includegraphics[width=0.48\linewidth]{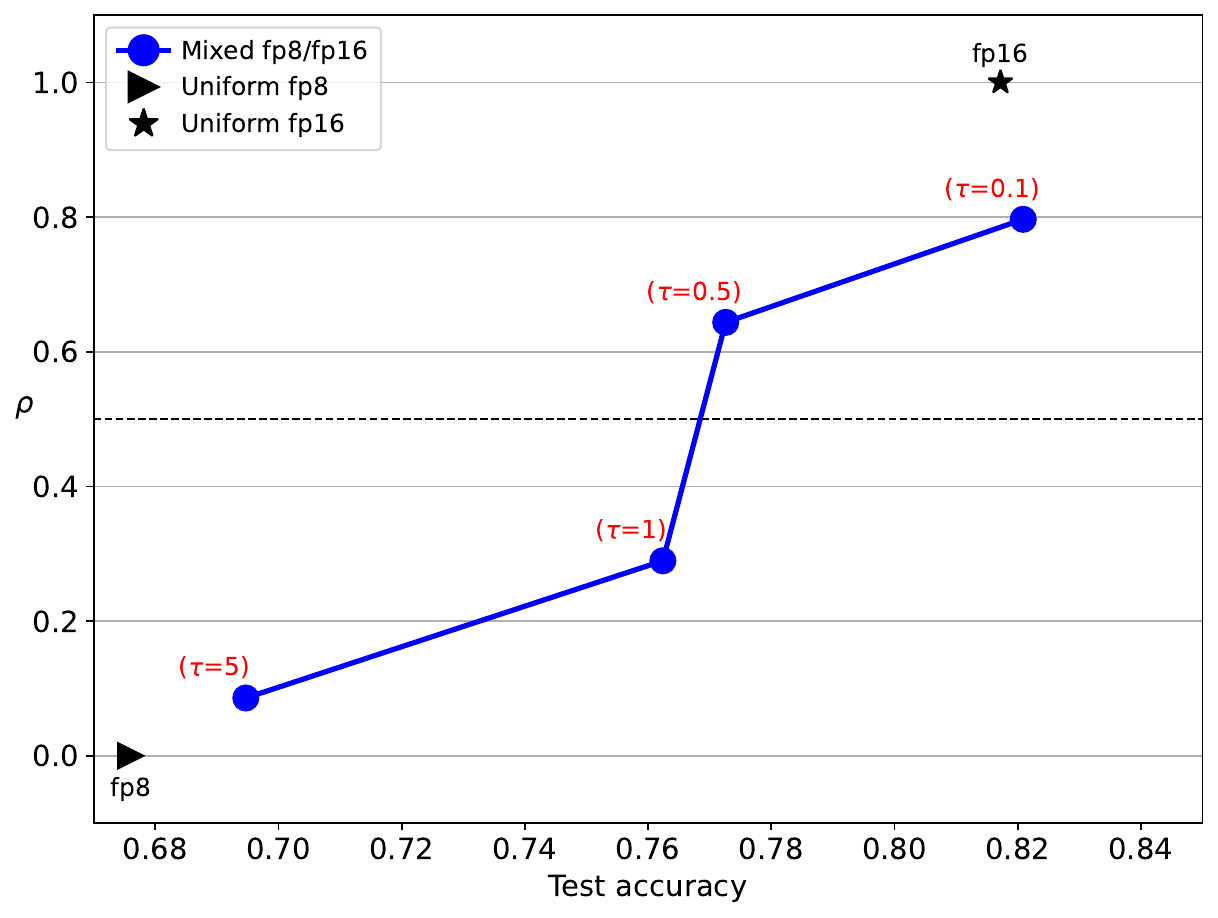}%
  }
  \caption{Same as Figure~\ref{fig:relu} but with $\tanh$ activations.}
	\label{fig:tanh}
\end{figure}

\section{Conclusion: discussion, limitations, and perspectives}
\label{sec:concl}

We have considered the problem of using mixed precision accumulation in the
matrix--multiply accumulate operations for neural network inference. 
In order to do so, we investigated the propagation of errors in the inference,
based on a generic error model that applies in particular to floating-point arithmetic. 
Specifically, we have carried out a componentwise forward error analysis,  
whose main conclusion is reported in Theorem~\ref{thm.main}. This key result shows that
the errors incurred in each inner product of each layer are proportional to
the condition number of the inner product and to the condition number of the activation functions. 
Therefore our analysis suggests (see Corollary~\ref{corollary}) to choose the
precision of each inner product to be inversely proportional to this product of
condition numbers. 

We have leveraged this insight by developing an inference algorithm with mixed precision accumulation.
We introduced some approximations in order to cheaply estimate the condition numbers, leading to the practical approach 
outlined in Algorithm~\ref{alg:ALG1} and illustrated in Figure~\ref{fig.schema}. 
We have experimentally validated the soundness and potential of this approach
on multilayer perceptrons networks with
ReLU and $\tanh$ activations. Our experimental results 
show indeed that the proposed mixed precision approach 
can significantly improve the cost--accuracy tradeoff:
in most cases, it is more accurate than the low precision baseline (fp8 in our tests) 
and less expensive than the high precision baseline (fp16 in our tests).

Despite these promising preliminary results, our approach does present some limitations that could be the object of future work. 
\begin{itemize}
\item 
 Like most traditional rounding error analyses, our analysis assumes the
 absence of overflow and underflow, that is, focuses on the effect of rounding
 errors only. However, this assumption may not be satisfied for some deep
 learning formats with very narrow range. Fortunately, the recent analysis of
 Mary and Mikaitis~\cite{mami25} for matrix multiplication provides some
 reassuring guarantees: provided that suitable scaling is used, overflow can be
 prevented and the effect of underflow can be made smaller than that of
 rounding errors.

\item
It is worth recalling that Algorithm~\ref{alg:ALG1} is the result of several simplifications described 
in Section~\ref{sec:simplif}. While we have experimentally observed these simplifications to be reasonable, 
this is only an empirical statement.

\item
Another limitation of Algorithm~\ref{alg:ALG1} is that its effectiveness
depends on the choice of the parameter $\tau$: a too large value may lead to results of quality no better
than the low precision baseline, while a too small value may lead to a
higher cost than the high precision baseline.
That being said, the choice of
$\tau$ seems to be mostly dependent on the choice of the activation function,
rather than on the dataset or on the neural network architecture. For example, with ReLU activations (Figure~\ref{fig:relu}),
$\tau=0.1$ provides satisfactory results across all experiments.

\item
An important question left for future work is the high performance implementation
of Algorithm~\ref{alg:ALG1}. The aim of our experiments has merely been to validate the principle of
the algorithm and its theoretical potential under a certain cost model; our experiments use simulated
lower precisions and so cannot assess to what extent this potential can be effectively translated in practice.
That being said, we believe there is significant hope for the algorithm to be efficient, since its use of mixed precision is focused
on the linear algebra part of the computation, for which efficient mixed precision implementations have been widely achieved~\cite{hima22}.
\end{itemize}

Finally, this work could be extended in a number of ways, giving rise to several interesting perspectives:
\begin{itemize}
\item
The analysis is general enough to cover various network architectures. 
We have focused our experiments on the multilayer perceptron one, but the approach could be adapted to convolutional networks. 
However, an analysis taking into account the specific structure of such networks should lead to sharper bounds.

\item The extension to transformers~\cite{vspu17} is also natural. A first
error analysis has been carried out by Budzinskiy et al.~\cite{bfzp25}, but
further investigation is required, in particular, concerning mixed precision
opportunities. Indeed, transformers involve the softmax function, which
amplifies variations in magnitude in the data---the mixed precision approach
developed here might thus be applicable.

\item
Algorithm~\ref{alg:ALG1} is dynamic in the sense that we first compute all inner products
in low precision and then dynamically decide at runtime which ones to recompute in high precision based on their estimated condition number. While this approach is robust, the recompute overhead cost
makes it practical only if a small fraction of inner products needs to be (re)computed in high precision. 
This overhead cost could be removed by instead statically deciding which inner products to compute
a priori in high precision---regardless of the input. Naturally, this requires ``learning'' the precision configuration on a representative set of input vectors.
This static strategy may achieve better performance--accuracy tradeoffs
than the dynamic strategy proposed here and may be easier to implement efficiently. Moreover, using a static precision
configuration would allow for exploiting mixed precision not just for the accumulation, but for the quantization too.

\item Finally, another meaningful direction for future research would be the
extension of our approach for online mixed-precision training.
\end{itemize}

\section*{Acknowledgments}

This work was partially supported by the InterFLOP (ANR-20-CE46-0009), 
MixHPC (ANR-23-CE46-0005-01), NumPEx ExaMA (ANR-22-EXNU-0002), MEPHISTO (ANR-24-CE23-7039-01), FPT-4 (ANR-24-CE46-7572), and HOLIGRAIL (ANR-23-PEIA-0010) projects of the French National Agency for Research (ANR).

\newpage

\bibliographystyle{myplain2-doi}
\bibliography{strings,tmary,refs}


\end{document}